
\documentclass{article}

\usepackage{microtype}
\usepackage{graphicx}
\usepackage{subfigure}
\usepackage{booktabs} 

\newcommand{\naturals}{\mathbb{N}}

\newcommand{\Fcal}{\mathcal{F}}
\newcommand{\Gcal}{\mathcal{G}}
\newcommand{\Hcal}{\mathcal{H}}

\newcommand{\Xcal}{\mathcal{X}}

\newcommand{\Qcal}{\mathcal{Q}}
\newcommand{\supp}{\operatorname{supp}}



\usepackage{hyperref}


\usepackage[accepted]{icml2025}


\usepackage{amsmath}
\usepackage{amssymb}
\usepackage{mathtools}
\usepackage{amsthm}
\usepackage{bbm}

\usepackage[capitalize,noabbrev]{cleveref}

\theoremstyle{plain}
\newtheorem{theorem}{Theorem}[section]

\newtheorem{lemma}[theorem]{Lemma}
\newtheorem{corollary}[theorem]{Corollary}
\theoremstyle{definition}
\newtheorem{definition}[theorem]{Definition}
\newtheorem{assumption}[theorem]{Assumption}
\theoremstyle{remark}
\newtheorem{remark}[theorem]{Remark}

\newtheorem*{theorem*}{Theorem}
\newtheorem*{lemma*}{Lemma}

\usepackage[textsize=tiny]{todonotes}

\icmltitlerunning{Generation from Noisy Examples}

\begin{document}

\twocolumn[
\icmltitle{Generation from Noisy Examples}



\icmlsetsymbol{equal}{*}

\begin{icmlauthorlist}
\icmlauthor{Ananth Raman}{equal,sch}
\icmlauthor{Vinod Raman}{equal,yyy}
\end{icmlauthorlist}

\icmlaffiliation{yyy}{University of Michigan, Ann Arbor}
\icmlaffiliation{sch}{Bridgewater-Raritan Regional High School}

\icmlcorrespondingauthor{Vinod Raman}{vkraman@umich.edu}

\icmlkeywords{Machine Learning, ICML}

\vskip 0.3in
]



\printAffiliationsAndNotice{\icmlEqualContribution} 

\begin{abstract}
We continue to study the learning-theoretic foundations of generation by extending the results from \citet{kleinberg2024language}  and  \citet{li2024generation} to account for \emph{noisy} example streams.
 In the \emph{noiseless} setting of \citet{kleinberg2024language} and  \citet{li2024generation}, an adversary picks a hypothesis from a binary hypothesis class and provides a generator with a sequence of its positive examples. The goal of the generator is to eventually output new, unseen positive examples. In the \emph{noisy} setting, an adversary still picks a hypothesis and a sequence of its positive examples. But, before presenting the stream to the generator, the adversary \emph{inserts} a finite number of negative examples. Unaware of which examples are noisy, the goal of the generator is to still eventually output new, unseen positive examples. In this paper, we provide necessary and sufficient conditions for when a binary hypothesis class can be noisily generatable. We provide such conditions with respect to various constraints on the number of distinct examples that need to be seen before perfect generation of positive examples. Interestingly, for finite and countable classes we show that generatability is largely unaffected by the presence of a finite number of noisy examples.
\end{abstract}

\section{Introduction}
Generation is an important paradigm in machine learning with promising applications to natural language processing \citep{wolf2020transformers}, computer vision \citep{khan2022transformers}, and computational chemistry \citep{vanhaelen2020advent}. In contrast to its practical applications, a strong theoretical foundation of generation is largely missing from literature. Recently, \citet{kleinberg2024language}, inspired by the seminal work of E Mark Gold on language identification in the limit \citep{gold1967language}, introduced a theoretical model of language generation called ``Language Generation in the Limit." In this model, there is a countable set $U$ of strings and a countable language family $C = \{L_1, L_2, \dots\}$, where $L_i \subseteq U$ for all $i \in \mathbb{N}$. An adversary  picks a language $K \in C$ and begins to enumerate the strings one by one to the player in rounds $t = 1, 2, \dots$.  After observing the string $w_t$ in round $t \in \mathbb{N}$, the player guesses a string $\hat{w}_t \in U$ in the hope that $\hat{w}_t \in K \setminus \{w_1, \dots, w_t\}$. The player has generated from $K$ in the limit, if there exists a finite time step $t \in \mathbb{N}$ such that for all $s \geq t$, we have that $\hat{w}_s \in K \setminus \{w_1, \dots, w_s\}$. The class $C$ is generatable in the limit, if the player can generate from all $K \in C.$ Unlike language identification in the limit, \citet{kleinberg2024language} prove that generation in the limit is possible for \emph{every} countable language family $C$.

\citet{kleinberg2024language}'s positive result has spawned a surge of new work, extending the results of \citet{kleinberg2024language} in various ways \citep{kalavasis2024limits, li2024generation, charikar2024exploring, kalavasis2024characterizations}. Of particular interest to us is the work by \citet{li2024generation}. They frame the results of \citet{kleinberg2024language} through a learning-theoretic lens by taking the language family $C$ to be a binary hypothesis class $\Hcal \subseteq \{0, 1\}^{\Xcal}$ defined over some countable instance space $\Xcal.$ By doing so, \citet{li2024generation} identify stronger notions of generatability, termed ``uniform" \footnote{Uniform generatability was also informally considered by \citet{kleinberg2024language}} and ``non-uniform" generatability, and provide characterizations of which classes are uniformly and non-uniformly generatable in terms of new combinatorial dimensions. 

A commonality of \citet{kleinberg2024language} and its follow-up work is the assumption that the adversary presents to the player a ``noiseless" stream of examples  -- one where every example/string must be contained in the hypothesis/language chosen by the adversary. In practice, such an assumption is unrealistic, as one would still like to generate well even if a few examples in the dataset are imperfect. For example, Large Language Models (LLM) are often trained using the potentially hallucinatory outputs of other LLMs \cite{burns2023weak, briesch2023large}. More generally, one might want a generative model to be robust to data contamination/poisoning attacks \citep{shang2018learning, zhang2023robust, jiang2023forcing}. 

Motivated by these concerns, we study generation, in the framework posed by \citet{kleinberg2024language} and \citet{li2024generation},  under \emph{noisy} example streams. In particular, we focus on a simple, but natural noising process: after selecting a positive example stream, the adversary is allowed to \emph{insert} a finite number of negative examples in any way it likes, \emph{unbeknownst} to the player. Such a noising process, as well as others, have been extensively studied in the context ``language identification in the limit" or ``inductive inference" 
\citep{schafer1985some, fulk1989learning, baliga1992learning, jain1994program, stephan1997noisy, case1997synthesizing, lange2002power, mukouchi2003refutable, tantini2006identification}. In this paper, we extend this study to \emph{generation}. To that end, our main contributions are summarized below. 
\begin{itemize}
\item[(1)]  We extend the notions of uniform generatability, non-uniform generatability, and generatability in the limit from \citet{kleinberg2024language} and \citet{li2024generation} to account for finite, noisy example streams. We call these new settings ``uniform noise-dependent generatability," ``non-uniform noise-dependent generatability," and ``noisy generatability in the limit" respectively.

\item[(2)] We provide a complete characterization of which classes are uniformly noise-dependent generatable in terms of a new scale-sensitive dimension we call the Noisy Closure dimension. 

\begin{theorem*}[Informal]A class $\Hcal \subseteq \{0, 1\}^{\Xcal}$ is \emph{uniformly noise-dependent generatable} if and only if $\operatorname{NC}_n(\Hcal) < \infty$ for every $n \in \naturals$, where  $\operatorname{NC}_n(\Hcal)$ is the \emph{Noisy Closure dimension} of $\Hcal$ at noise-level $n.$
\end{theorem*}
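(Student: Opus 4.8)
The plan is to prove both directions of the characterization separately, using a \emph{noisy version space} and its induced \emph{noisy closure} as the central objects. For a finite set of observed examples $S \subseteq \Xcal$ and a noise budget $n$, define the noisy version space $\Hcal_S^n = \{h \in \Hcal : |S \setminus \supp(h)| \le n\}$ and the noisy closure $\operatorname{cl}_n(S) = \bigcap_{h \in \Hcal_S^n} \supp(h)$. The guiding observation is that the adversary's true hypothesis $h^\star$ always lies in $\Hcal_S^n$, since it disagrees with $S$ only on the at most $n$ inserted negatives, so $\operatorname{cl}_n(S) \subseteq \supp(h^\star)$; hence any element of $\operatorname{cl}_n(S) \setminus S$ is a valid, unseen positive example. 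I expect $\operatorname{NC}_n(\Hcal)$ to be defined as the largest $d$ for which some $S$ with $|S| = d$ has $\Hcal_S^n$ infinite yet $\operatorname{cl}_n(S) = S$, so finiteness of the dimension is exactly the statement that, once enough distinct examples are seen, the closure must strictly grow.

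For the \emph{sufficiency} (``if'') direction, assume $\operatorname{NC}_n(\Hcal) < \infty$ for every $n$. I would construct a generator that, after observing the distinct set $S$ at noise level $n$, outputs an arbitrary element of $\operatorname{cl}_n(S) \setminus S$ whenever it is nonempty. Correctness is immediate from $\operatorname{cl}_n(S) \subseteq \supp(h^\star)$. For progress, once the number of distinct examples exceeds $\operatorname{NC}_n(\Hcal)$ I split into two cases. If $\Hcal_S^n$ is infinite, the definition of the dimension forces $\operatorname{cl}_n(S) \supsetneq S$, producing a new positive. If $\Hcal_S^n$ is finite, I reduce to a finite-class argument: among finitely many candidates, the adversary's enumeration eventually isolates a subfamily whose common support strictly exceeds $S$, so a consistency-based choice generates correctly. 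Assembling these gives a uniform bound on the number of distinct examples as a function of $n$ and $\operatorname{NC}_n(\Hcal)$, which is precisely uniform noise-dependent generatability.

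For the \emph{necessity} (``only if'') direction I argue by contraposition: if $\operatorname{NC}_n(\Hcal) = \infty$ for some $n$, no generator attains a uniform bound at noise level $n$. Given any candidate bound $d$, infinitude of the dimension supplies a set $S$ with $|S| > d$, $\Hcal_S^n$ infinite, and $\operatorname{cl}_n(S) = S$. The adversary enumerates $S$; once the generator has seen all of $S$ and must emit a guess $\hat w \notin S$, the equality $\operatorname{cl}_n(S) = S$ guarantees some $h \in \Hcal_S^n$ with $\hat w \notin \supp(h)$, and the adversary commits to $h^\star = h$, which is consistent with $S$ up to at most $n$ noisy insertions. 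To defeat the ``eventually always correct'' requirement rather than a single step, I use the infinitude of $\Hcal_S^n$ to keep extending the stream with genuine positives of the surviving hypotheses while preserving the non-closure property, forcing an error at some step past the threshold.

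I expect the necessity construction to be the main obstacle: committing to a single $h^\star$ that is simultaneously consistent, within the noise budget, with the presented stream and makes the generator err \emph{after} its purported threshold requires a diagonalization/tree argument that exploits the infinite version space to keep the ambiguity alive, rather than merely producing a one-step inconsistency. A secondary technical point is verifying that every witness set $S$ from the dimension corresponds to a legitimate noisy stream, i.e., that the at-most-$n$ disagreements encoded by $\Hcal_S^n$ translate into a valid adversarial insertion of at most $n$ negatives, together with handling the finite-version-space case in sufficiency without circularity.
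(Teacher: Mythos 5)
Your proposal attacks the right objects (noisy version spaces and their closures), but it is built on an incorrect guess of what $\operatorname{NC}_n(\Hcal)$ is, and the discrepancy is fatal rather than cosmetic. In the paper, $\operatorname{NC}_n(\Hcal)$ is the largest $d$ for which there exist distinct $x_1,\dots,x_d$ whose version space $\Hcal(x_{1:d};n)$ is merely \emph{nonempty} and whose closure $\langle x_{1:d}\rangle_{\Hcal,n}$ is \emph{finite}; you instead require the version space to be \emph{infinite} and the closure to equal $S$ exactly. With your definition the ``if'' direction of the theorem is false. Counterexample: partition a countable $\Xcal$ into infinite blocks $B_1,B_2,\dots$, and inside each $B_d$ place two hypotheses $f_d,g_d$ with infinite supports contained in $B_d$ and $|\supp(f_d)\cap\supp(g_d)|=d$. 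If hypotheses from two different blocks each miss at most $n$ elements of a set $S$, disjointness of the blocks forces $2(|S|-n)\le|S|$, i.e.\ $|S|\le 2n$; hence every set with $|S|>2n$ has version space of size at most $2$, so \emph{your} dimension is at most $2n<\infty$ for every $n$. Yet this class is not uniformly noise-dependent generatable (it is not even noiselessly uniformly generatable): after the adversary enumerates $\supp(f_d)\cap\supp(g_d)$ for huge $d$, any unseen output is wrong for $f_d$ or for $g_d$. The paper's dimension is infinite here precisely because it counts witnesses with finite version spaces and finite closures. Correspondingly, both halves of your sufficiency ``progress'' step fail under the correct definition: (i) ``closure $\neq S$'' does not produce an unseen safe example, since at noise level $n\ge 1$ the closure need not contain $S$ at all, so one can have $\langle S\rangle_{\Hcal,n}\subsetneq S$ and $\langle S\rangle_{\Hcal,n}\setminus S=\emptyset$; and (ii) the finite-version-space case cannot be ``reduced away'' by waiting for the enumeration to isolate a subfamily --- in the example the isolated subfamily is $\{f_d,g_d\}$ and its common support never exceeds the seen set, and uniform generation requires a bound on distinct examples fixed in advance, before any isolation could occur.

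Two further mismatches with the paper's actual argument. First, your generator computes $\operatorname{cl}_n(S)$ for ``the'' noise level $n$, but in Definition~\ref{def:nug} the generator is quantified \emph{before} the adversary's $n^{\star}$, so it cannot know $n$; the paper's generator resolves this by adaptively computing $n_t=\max\{n\in[t]:\operatorname{NC}_n(\Hcal)<d_t\}\cup\{0\}$ from the number $d_t$ of distinct examples seen so far and playing from $\langle x_{1:t}\rangle_{\Hcal,n_t}\setminus\{x_1,\dots,x_t\}$; showing that eventually $n_s\ge n^{\star}$ (and that the closure at level $n_s$ is then infinite) is the heart of the sufficiency proof. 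Second, the diagonalization you anticipate for necessity is unnecessary: since uniform generatability demands correctness at \emph{every} time after the threshold, a single forced error suffices. The paper's adversary presents the witness examples together with their entire finite closure, queries the generator once, picks a hypothesis $h^{\star}\in\Hcal(x_{1:d^{\star}};n)$ whose support excludes the generator's output (one exists because the closure is contained in the presented examples), and completes the stream with fresh positives of $h^{\star}$; the noise budget is respected automatically because $h^{\star}$ lies in the version space at level $n$.
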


We show that uniform noise-dependent generation is strictly harder than noiseless uniform generation. In fact, we construct a countably infinite class which is trivially uniformly generatable when there is no noise, but not uniformly noise-dependent generatable even when the adversary is only allowed to perturb a single example. Despite this hardness, we show that all finite classes are still uniformly noise-dependent generatable. 

\item[(3)] We provide a sufficient condition for which classes are non-uniformly noise-dependent generatable. 

\begin{lemma*}[Informal] A class $\Hcal \subseteq \{0, 1\}^{\Xcal}$ is \emph{non-uniformly noise-dependent generatable} if there exists a non-decreasing sequence of classes $\Hcal_1 \subseteq \Hcal_2 \subseteq \dots$ such that $\Hcal = \bigcup_{i=1}^{\infty} \Hcal_i$ and $\operatorname{NC}_i(\Hcal_i) < \infty$ for all $i \in \naturals.$
\end{lemma*}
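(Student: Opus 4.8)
The plan is to reduce the statement to the characterization of \emph{uniform} noise-dependent generatability from the characterization theorem above, and then stitch the resulting uniform generators together with an escalation scheme. Two structural facts drive the argument. First, since $\operatorname{NC}_j(\Hcal_j) < \infty$, the sufficiency direction of that theorem supplies, for each $j \in \naturals$, a uniform noise-dependent generator $G_j$ for $\Hcal_j$ at noise level $j$; concretely, there is a finite threshold $d_j$ such that whenever the target lies in $\Hcal_j$, the stream contains at most $j$ noisy examples, and more than $d_j$ distinct examples have been observed, $G_j$ outputs a new, unseen positive example. Second, because the $\Hcal_i$ are nested, any target $h^\star \in \Hcal$ lies in $\Hcal_{i^\star}$ for some $i^\star$, and hence in $\Hcal_j$ for every $j \ge i^\star$. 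The central idea is to couple the ``trusted'' noise budget to a class index that grows with the number of distinct examples seen, so that a single generator simultaneously copes with the two unknown quantities: the index $i^\star$ locating $h^\star$ and the true noise level $n$.

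Concretely, I would define the combined generator $G$ as follows. Let $m_t$ denote the number of distinct examples in the stream through round $t$, and set $i(t) = \max\{ j \le t : m_t > d_j \}$ when this set is nonempty (and an arbitrary guess otherwise), letting $G$ emit the guess of $G_{i(t)}$. Since the adversarial enumeration lists every positive example of the (infinite) target and the finitely many inserted negatives cannot affect this, $m_t \to \infty$; consequently, for every fixed $j$, once $t \ge j$ and $m_t > d_j$ we have $i(t) \ge j$, so $i(t) \to \infty$. Now fix $h^\star$ and a true noise level $n$, and put $J = \max(i^\star, n)$. Choose $t_0$ with $i(t) \ge J$ for all $t \ge t_0$. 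For any such $t$, writing $j = i(t)$, we have $h^\star \in \Hcal_j$ (by nesting, since $j \ge i^\star$), at most $n \le J \le j$ noisy examples, and $m_t > d_j$ by the definition of $i(t)$; the guarantee for $G_j$ then forces $G$'s round-$t$ output to be a new, unseen positive example of $h^\star$. As this holds for all $t \ge t_0$, the generator succeeds, and since $h^\star$ and $n$ were arbitrary, $\Hcal$ is non-uniformly noise-dependent generatable.

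I expect the main obstacle to be conceptual rather than computational: a naive scheme must contend with \emph{two} simultaneous unknowns -- which subclass $\Hcal_{i^\star}$ contains the target, and how many examples $n$ were corrupted -- and committing too early to a small index or too small a noise budget is fatal. The coupling device above, in which the index $i(t)$ plays the roles of both class selector and noise budget, dissolves this difficulty: once $i(t)$ overtakes $\max(i^\star, n)$, correctness is automatic and, crucially, the ``largest ready index'' rule guarantees that $G_{i(t)}$ has already seen enough distinct examples ($m_t > d_{i(t)}$) to be in its correct regime, so no re-accumulation penalty is incurred when the index advances. The remaining points to verify are routine: that $m_t \to \infty$ under the standing assumption that the target has infinite support and is fully enumerated, and that the per-level thresholds $d_j$ are exactly those delivered by the sufficiency construction of the theorem. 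Note that the stabilization time $t_0$ depends on $i^\star$ (hence on $h^\star$) and on $n$, which is precisely what distinguishes this from uniform generatability.
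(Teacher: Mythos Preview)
Your construction is essentially the paper's: both select, at each round, the largest index $j \le t$ whose threshold has been cleared by the current count of distinct examples, and then defer to the noise-$j$ generator for $\Hcal_j$ (the paper inlines this as playing from $\langle x_{1:t}\rangle_{\Hcal_{j_t},j_t}$, whereas you treat the uniform generator as a black box $G_j$; since the paper's $d_j$ is precisely $\operatorname{NC}_j(\Hcal_j)$, the two index rules coincide).

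One point to tighten: the definition of non-uniform noise-dependent generatability does \emph{not} assume the stream is a (noisy) enumeration of $\supp(h^\star)$, so your appeal to ``$m_t\to\infty$'' is not available here---the adversary may repeat examples forever. The correct framing, which the paper uses, is to exhibit an explicit threshold $d^\star$ on the number of \emph{distinct} examples: with $J=\max(i^\star,n)$, take $d^\star=\max(d_J+1,J)$; then $m_t\ge d^\star$ forces $t\ge J$ and $m_t>d_J$, hence $i(t)\ge J$, and your correctness argument goes through verbatim from that point. This is a cosmetic fix, not a structural one.
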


Using the result that all finite classes are uniformly noise-dependent generatable, we prove as a corollary that all countable classes are non-uniformly noise-dependent generatable. Since non-uniform noise-dependent generation implies noisy generation in the limit, this corollary also implies that all countable classes are noisily generatable in the limit. Although not matching, we also provide a necessary condition for non-uniform noise-dependent generation in terms of the Noisy Closure dimension. 

\begin{lemma*}[Informal] A class $\Hcal \subseteq \{0, 1\}^{\Xcal}$ is \emph{non-uniformly noise-dependent generatable} only if for every $n \in \naturals$, there exists a non-decreasing sequence of classes $\Hcal_1 \subseteq \Hcal_2 \subseteq \dots$ such that $\Hcal = \bigcup_{i=1}^{\infty} \Hcal_i$ and $\operatorname{NC}_n(\Hcal_i) < \infty$ for all $i \in \naturals.$
\end{lemma*}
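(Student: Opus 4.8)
The plan is to prove the ``only if'' direction directly, turning a non-uniform generator into the required decomposition one noise level at a time. Fix $n \in \naturals$ and let $\Gcal$ be a generator witnessing the non-uniform noise-dependent generatability of $\Hcal$. Specializing its guarantee to noise level $n$, the definition yields a function $d_n : \Hcal \to \naturals$ such that for every target $h \in \Hcal$ and every valid enumeration that inserts at most $n$ negative examples, once $\Gcal$ has observed $d_n(h)$ distinct examples it outputs only new, unseen positive examples of $h$ from that point onward. The first thing I would record is that $d_n(h) < \infty$ for every $h$; this finiteness is exactly what non-uniform generatability provides, in contrast to the uniform setting where a single budget must work for all of $\Hcal$.

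Next I would define the decomposition by thresholding this per-hypothesis budget: set $\Hcal_i := \{ h \in \Hcal : d_n(h) \le i \}$ for each $i \in \naturals$. Monotonicity of the defining inequality gives $\Hcal_1 \subseteq \Hcal_2 \subseteq \cdots$, and since $d_n(h)$ is finite for every $h$, each hypothesis lands in some $\Hcal_i$, so $\bigcup_{i=1}^{\infty} \Hcal_i = \Hcal$. This bookkeeping is the routine part; the substance is showing $\operatorname{NC}_n(\Hcal_i) < \infty$ for each $i$.

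The key observation for that bound is that $\Gcal$, once restricted to targets in $\Hcal_i$, becomes a \emph{uniform} noise-dependent generator at noise level $n$ with a single distinct-example budget, namely $i$, because every $h \in \Hcal_i$ satisfies $d_n(h) \le i$. Hence $\Hcal_i$ is uniformly noise-dependent generatable at level $n$, and I would invoke the necessary direction of the Noisy Closure dimension characterization (the same lower-bound argument used to establish the main theorem, specialized to a single noise level $n$) to conclude $\operatorname{NC}_n(\Hcal_i) < \infty$; one in fact expects the sharper $\operatorname{NC}_n(\Hcal_i) \le i$.

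The main obstacle is this last step, and if the single-level lower bound is not available off the shelf I would reprove it by contradiction. Supposing $\operatorname{NC}_n(\Hcal_i)$ were infinite, the definition of the dimension supplies, for arbitrarily large $N$, a sequence of $N$ distinct points whose $n$-noisy version space $V \subseteq \Hcal_i$ is nonempty yet has finite common support $F := \bigcap_{h \in V} \supp(h)$. An adversary can enumerate these $N > i$ points together with all of $F$, consistently with some target in $V$ and using at most $n$ inserted negatives; after $\Gcal$ has seen more than $i$ distinct examples it is forced to emit a new positive example, but every ``safe'' point of $F$ has been exhausted, so its guess $z$ must lie outside $F$. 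By definition of $F$ there is some $h' \in V$ with $z \notin \supp(h')$, and since the stream so far is a legitimate $n$-noisy enumeration of $h'$, the generator has failed on a hypothesis with $d_n(h') \le i$ --- the desired contradiction. The delicate points are keeping the stream simultaneously consistent with all of $V$ while forcing the budget to be exceeded, and handling the adversary's commitment to a target, exactly as in the proof of the uniform characterization.
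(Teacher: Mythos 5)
Your proposal is correct and follows essentially the same route as the paper: fix $n$, define the per-hypothesis budget $d_{h,n}$ from the non-uniform generator, threshold to obtain $\Hcal_i = \{h : d_{h,n} \le i\}$, and observe that $\Gcal$ is then a \emph{uniform} noise-dependent generator for $\Hcal_i$ at level $n$, so the necessity direction of Theorem~\ref{thm:noisyunifgen} (specialized to noise level $n$, exactly the lower-bound argument you sketch) gives $\operatorname{NC}_n(\Hcal_i) < \infty$. Your fallback contradiction argument is precisely the paper's proof of that necessity direction, so nothing further is needed.
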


We leave the complete characterization of non-uniform noise-dependent generatability as an open question. 

\item[(4)] We provide two different sufficiency conditions for noisy generatability in the limit. The first shows that (noiseless) non-uniform generatability is sufficient for noisy generatability in the limit. 

\begin{theorem*}[Informal] If a class $\Hcal \subseteq \{0, 1\}^{\Xcal}$ is (noiseless) non-uniformly generatable, then it is noisily generatable in the limit. 
\end{theorem*}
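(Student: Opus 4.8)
The plan is to combine the non-uniform structure with a dovetailing over candidate explanations of the noise. First I would invoke the structural characterization of non-uniform generatability due to \citet{li2024generation} to write $\Hcal=\bigcup_{i=1}^{\infty}\Hcal_i$ as a nested union $\Hcal_1\subseteq\Hcal_2\subseteq\cdots$ in which each $\Hcal_i$ is uniformly generatable, and hence has a finite closure dimension $d_i$. Fix the adversary's true hypothesis $h$, so that $h\in\Hcal_{i^*}$ for some finite $i^*$, and note that the inserted negatives occupy a finite, unknown set of stream positions $F^*$; every position outside $F^*$ carries a genuine positive of $h$, so the clean part of the stream is cofinite and $\supp(h)$ is visited infinitely often.

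Next I would build a single generator by dovetailing over pairs $(i,F)$, where $i\in\naturals$ indexes the level and $F$ is a finite set of stream positions guessed to be noisy, enumerating these pairs in an order of type $\omega$ so that each pair has only finitely many predecessors and the true pair $(i^*,F^*)$ has some finite rank. At time $t$, writing $C^F_t$ for the set of distinct values seen at positions in $[t]\setminus F$, I call $(i,F)$ eligible if $|C^F_t|>d_i$ and the version space $\Hcal_{i,C^F_t}=\{h'\in\Hcal_i:C^F_t\subseteq\supp(h')\}$ is nonempty; its proposal is the least unseen element of the closure $\bigcap_{h'\in\Hcal_{i,C^F_t}}\supp(h')$, and the generator outputs the proposal of the eligible pair of highest priority. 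The true pair behaves perfectly: once $t$ is large enough that $F^*\subseteq[t]$, the cleaned sample $C^{F^*}_t$ consists solely of positives of $h$, so $h$ lies in the version space and the closure is contained in $\supp(h)$, while $|C^{F^*}_t|\to\infty>d_{i^*}$ forces the closure to be infinite; hence every proposal of $(i^*,F^*)$ is an unseen positive of $h$. By the same argument together with the nesting $\Hcal_{i^*}\subseteq\Hcal_i$, any pair with $i\ge i^*$ and $F\supseteq F^*$ is eventually eligible and correct.

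The main obstacle is controlling the finitely many pairs that outrank $(i^*,F^*)$. A higher-priority pair is dangerous only if it undershoots the level ($i<i^*$, so possibly $h\notin\Hcal_i$) or under-removes the noise ($F\not\supseteq F^*$, so some inserted negative survives in $C^F_t$); in either case $h$ may fall outside its version space and its closure need not lie in $\supp(h)$. One would like to argue that such a pair eventually becomes ineligible, its version space $\Hcal_{i,C^F_t}$ emptying as the cleaned sample grows, but this can genuinely fail: a spurious hypothesis whose support merely adjoins the surviving negatives to $\supp(h)$ may stay consistent forever. Resolving this is the heart of the proof, and I expect one must either reorder the priority to prefer larger levels and cleanings that provably remove more points, so that under-cleaning pairs are dominated, or else show directly that any pair surviving forever must in fact propose only elements of $\supp(h)$. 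Either route has to exploit the finiteness of the noise, the finite closure dimension of each $\Hcal_i$, and the monotonicity of the version spaces along the nested decomposition. Once the dangerous predecessors are neutralized, the generator's output agrees with a correct proposal from some finite time onward, establishing noisy generation in the limit.
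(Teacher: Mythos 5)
Your construction has a genuine gap, and it is exactly the one you flag yourself: the finitely many pairs $(i,F)$ that outrank the true pair $(i^*,F^*)$ cannot be neutralized by either of the routes you sketch. Route (2) --- showing that any permanently surviving pair proposes only elements of $\supp(h)$ --- is false: if $\Hcal_i$ contains a hypothesis $h'$ with $\supp(h') \supseteq \supp(h) \cup \{x_p\}$ for a surviving negative $x_p$, and $\supp(h')$ also contains some $y \notin \supp(h)$ that never appears in the stream, then $(i,F)$ stays eligible forever, its closure can equal $\supp(h')$, and once the elements of $\supp(h')$ preceding $y$ have all been enumerated or are never going to appear, its proposal is $y \notin \supp(h)$ from some point on --- so a higher-priority bad pair makes the generator err forever. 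Route (1) --- preferring larger levels and larger cleanings --- conflicts with the dovetailing requirement that the order have type $\omega$: ``prefer larger $i$ and larger $F$'' admits no such well-ordering, and for any fixed order of type $\omega$ an adversary can choose $h$ and the noise so that some under-cleaning pair precedes $(i^*,F^*)$ and is, by the example above, permanently eligible and permanently wrong. So the construction as written does not prove the theorem, and the missing idea is not a refinement of the priority scheme.

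The paper sidesteps this arbitration problem entirely, and does so black-box: it never invokes the decomposition of $\Hcal$ into uniformly generatable pieces, only a non-uniform generator $\Qcal$ assumed to exist. At each round $t$ it runs $\Qcal$ on the suffix $x_{r_t:t}$, where $r_t$ is chosen so that this suffix contains exactly $\lfloor d_t/2 \rfloor$ of the $d_t$ distinct examples seen so far. Because the noise is finite and the stream is a noisy enumeration, this single time-varying window eventually lies entirely inside $\supp(h)$ while containing at least $d_{\Qcal}(h)$ distinct examples --- there is no fixed countable family of candidate ``cleanings'' competing for priority, so no dangerous predecessors exist. The remaining wrinkle, that $\Qcal$'s output is guaranteed new only relative to the suffix and might coincide with an example from the discarded prefix, is handled by feeding each rejected proposal back into $\Qcal$ as an additional input: every proposal lies in $\supp(h)$ by induction (the inputs are all positives once the window is clean), each rejected proposal is a distinct element of the prefix, and the prefix has fewer than $r_t$ distinct values, so within $r_t$ iterations some proposal avoids all of $\{x_1,\dots,x_t\}$. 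If you want to salvage your approach, the repair is to replace the guessed noise-sets $F$ by this growing suffix window, at which point the dovetailing and the decomposition become unnecessary.
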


Since \citet{li2024generation} prove that all countable classes are (noiseless) non-uniform generatable, this result also shows that all countable classes are noisily generatable in the limit. In addition, we also give a sufficiency condition in terms of an even stronger notion of noisy generatability called ``uniform noise-independent generatability." 

\begin{theorem*}[Informal] If there exists a finite sequence of uniformly noise-independent generatable classes $\Hcal_1, \Hcal_2, \dots, \Hcal_k$ such that $\Hcal = \bigcup_{i=1}^k \Hcal_i$, then $\Hcal$ is noisily generatable in the limit. 
\end{theorem*}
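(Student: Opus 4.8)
The plan is to fix the adversary's target $h\in\Hcal=\bigcup_{i=1}^k\Hcal_i$, so that $h\in\Hcal_{i^\star}$ for at least one index $i^\star$, together with a noisy stream of $h$ containing only finitely many negatives. The reduction I would make first is to pass each subclass to its uniform noise-independent generator $G_i$. Two properties are crucial and are exactly what ``noise-independent'' buys us: each $G_i$ can be run \emph{without} knowing the (finite but unknown) number of inserted negatives, and whenever the target lies in $\Hcal_i$ there is a finite time after which $G_i$ emits only unseen positive examples of it. In particular $G_{i^\star}$ is eventually always correct on $h$. The whole problem thus reduces to combining the finitely many generators $G_1,\dots,G_k$ into a single strategy that, in the limit, follows a good one.

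My first attempt would be an elimination scheme. I would maintain a slowly growing noise tolerance $m_t\to\infty$ and, at each round, compute for every $i$ the $m_t$-noisy version space of $\Hcal_i$ on the distinct examples seen so far, declaring $i$ \emph{live} when it is nonempty; I would then follow the live generator of smallest index. Because $m_t\to\infty$, once $m_t$ exceeds the true noise level the class $\Hcal_{i^\star}$ is live at every subsequent round, so at least one good index is permanently live. Any index whose subclass is eventually inconsistent with the stream at \emph{every} finite noise budget drops out after finite time, and these are exactly the ``wrong'' classes that cannot explain the data at all. For such classes the elimination is clean and the number of switches they cause is finite.

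The main obstacle, and the step I expect to be hard, is that an index $i$ with $h\notin\Hcal_i$ can nonetheless stay live forever: some $h'\in\Hcal_i$ may agree with $h$ on cofinitely many stream points, so the stream is simultaneously a valid noisy stream for $h'$. Then $G_i$ generates perfect positives of $h'$, which need not be positives of $h$, and a consistency-only rule (such as ``smallest live index'') can lock onto such an $i$ and fail. To overcome this I would have the combined strategy emit points not from a raw generator output but from the \emph{robust closure} that a committing index certifies, namely the infinite set of instances on which every hypothesis in that class's noisy version space is cofinitely positive, and I would add a compatibility filter that only commits to an index whose certified closure is consistent with the observed stream (in the evens/odds style obstruction, this is precisely what rules out the class whose closure is disjoint from the stream). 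Using noise-independence, i.e.\ robustness to arbitrarily large finite noise, one should be able to show that $\Hcal_{i^\star}$'s certified closure always passes the filter, while any index that passes it certifies a set on which the true $h$ is cofinitely positive; the emitted points, taken in increasing order so as to skip each hypothesis's finite exceptional set, are then eventually all positive for $h$. Establishing this last containment, and simultaneously bounding the number of index switches to finitely many, is the crux of the argument.

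As a hedge, I would also keep in reserve a reduction to the previously stated theorem: show that a finite union of uniformly noise-independent generatable classes is (noiseless) non-uniformly generatable, each piece being uniformly generatable and the union inheriting a target-dependent bound, and then invoke ``(noiseless) non-uniform generatability implies noisy generatability in the limit.'' This would shorten the write-up, but the obstacle does not disappear: proving non-uniform generatability of the union reduces to the same commitment problem of selecting, in the limit, a subclass whose generator outputs positives of the true $h$.
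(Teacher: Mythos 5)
Your proposal correctly isolates the central difficulty---an index $i$ with $h \notin \Hcal_i$ can remain consistent with the stream forever, and a generator for $\Hcal_i$ may then emit positives of some $h' \in \Hcal_i$ that are not positives of $h$---but it does not resolve it: you explicitly leave unproven the containment that points emitted from a ``certified closure'' are eventually positive for $h$, as well as the finiteness of index switches. That unproven step \emph{is} the theorem, so this is a genuine gap rather than a complete proof. The idea you are missing is the paper's own characterization (Theorem \ref{thm:ung}): $\Hcal_i$ is uniformly noise-independent generatable if and only if $\bigl|\bigcap_{h' \in \Hcal_i} \supp(h')\bigr| = \infty$. This lets one discard the black-box generators $G_i$ entirely and replace each class by a fixed, stream-independent infinite certificate set $Z_i := \bigcap_{h' \in \Hcal_i}\supp(h')$, which collapses the commitment problem you ran into.

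Concretely, the paper's generator sorts each $Z_i$ as $z^i_1, z^i_2, \dots$, tracks $p^i_t$, the length of the longest prefix of $Z_i$ contained in $\{x_1,\dots,x_t\}$, follows $i_t \in \arg\max_i p^i_t$, and outputs any element of $Z_{i_t} \setminus \{x_1,\dots,x_t\}$. No elimination scheme, no noise tolerance $m_t$, no compatibility filter, and no bound on switches is needed. Since the stream is a noisy \emph{enumeration}, all of $Z_{i^\star} \subseteq \supp(h)$ appears in it, so $p^{i^\star}_t \to \infty$, whereas any $j$ whose $Z_j$ is not wholly contained in the stream has $\sup_t p^j_t < \infty$; hence eventually $i_t$ always lies in the set of indices whose entire certificate is contained in the stream. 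For such $i_t$---even when $h \notin \Hcal_{i_t}$---the unseen elements of $Z_{i_t}$ are stream elements that occur after time $t$, so once $t$ is past the finitely many noisy positions they all lie in $\supp(h)$. This one-line observation is exactly the containment you could not establish, and it shows where ``noise-independent'' does its work: not as robustness of a generator, but through the structural fact that the whole class shares an infinite support intersection. Your hedge inherits the same gap in different clothing; moreover its intermediate claim (that the finite union is noiseless non-uniformly generatable) is unsupported and doubtful, since in non-uniform generation the adversary need not enumerate $\supp(h)$ and can withhold $Z_{i^\star}$ forever---which is precisely why the analogous Theorem 3.10 of \citet{li2024generation} concludes only generation in the limit for finite unions.
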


One can think of this latter sufficiency condition as the analog of Theorem 3.10 in \citet{li2024generation} which shows that the ability to write a class as the finite union of (noiseless) uniformly generatable classes is sufficient for (noiseless) generatability in the limit. 
\end{itemize}
While our theorem statements look similar to that of \citet{li2024generation}, our proof techniques are different due to the fact that the Noisy Closure dimension is scale-sensitive. This difference manifests even when characterizing noisy uniform generation. In particular, our uniform generator effectively requires combining different generators for each noise-level whereas the noiseless uniform generator from \citet{li2024generation} does not.

\subsection{Related Work}

\noindent \textbf{Language Identification in the Limit.} In his seminal 1967 paper, E Mark Gold introduced the model of ``Language Identification in the Limit \cite{gold1967language}." In this model, there is a countable set $U$ of strings and a countable language family $C = \{L_1, L_2, \dots\}$, where $L_i \subseteq U$ for all $i \in \mathbb{N}$. An adversary  picks a language $K \in C$, and begins to enumerate the strings in $K$ one by the one to the player in rounds $t = 1, 2, \dots$.  After observing the string $w_t$ in round $t \in \mathbb{N}$, the player guesses an index $i_t \in \naturals$ with the hope that $L_{i_t} = K$. The player has identified  $K$ in the limit, if there exists a finite time step $t \in \mathbb{N}$ such that for all $s \geq t$, we have that $L_{i_s} = K$. The class $C$ is identifiable in the limit, if the player can identify all $K \in C.$  Gold showed that while all finite language families are identifiable in the limit, there are simple countable language families which are not. In a follow-up work, \citet{angluin1979finding, angluin1980inductive} provides a precise characterization of language families $C$ for which language identification in the limit is possible. The results by Gold and Angluin emphasized the impossibility of language identification in the limit by ruling out the vast majority of language families. Since Gold's seminal work on language identification in the limit, there has been extensive follow-up work on this model and we refer the reader to the excellent survey by \citet{lange2008learning}.

\noindent \textbf{Language Identification from Noisy Examples.} In addition to the work by \citet{angluin1979finding, angluin1980inductive}, a different line of follow-up work focused on extending Gold's model to account for noisy example streams. There have been several proposed noise models, and we review a few of them here. The most standard noise model for language identification in the limit allows the adversary to first pick an enumeration  of its chosen language $K \in L$ and then \emph{insert} a \emph{finite} number of strings that do not belong to $K$ \citep{baliga1992learning, fulk1989learning, schafer1985some, case1997synthesizing, stephan1997noisy, lange2002power}. We use the term \emph{insert} to emphasize that every $w \in K$ must still be present in the sequence chosen by the adversary. Such noisy streams are often referred to as \emph{noisy texts}. \citet{jain1994program} go beyond the finite nature of the noise by considering sequences of strings with an infinite number of noisy incorrect strings, where the amount of noise is measured using certain density notions. In a different direction, \citet{mukouchi2003refutable} and \citet{tantini2006identification} model noise at the string-level by defining a distance metric on the space of all strings and allowing the adversaries to insert and delete strings in the original enumeration of $K$ which are at most some distance away from some true string in $K$. 

\noindent \textbf{Language Generation in the Limit.} Inspired by large language models and recent interest in generative machine learning, \citet{kleinberg2024language} study the problem of language \emph{generation} in the limit. In this problem, the adversary also  picks a language $K \in C$, and begins to enumerate the strings one by the one to the player in rounds $t = 1, 2, \dots$. However, now, after observing the string $w_t$ in round $t \in \mathbb{N}$, the player guesses a string $\hat{w}_t \in U$ with the hope that $\hat{w}_t \in K \setminus \{w_1, \dots, w_t\}$.  The player has generated from $K$ in the limit, if there exists a finite time step $t \in \mathbb{N}$ such that for all $s \geq t$, we have that $\hat{w}_s \in K \setminus \{w_1, \dots, w_s\}$. \citet{kleinberg2024language} prove a strikingly different result than Gold and Angluin -- generation in the limit is possible for \emph{every} countable language family $C$. This positive result has spurred a number of follow-up works, which we briefly review below. 

\citet{kalavasis2024limits} study generation in the stochastic setting, where the positive examples revealed to the generator are sampled i.i.d. from some unknown distribution. In this model, they study the trade-offs between generating with breadth and generating with consistency and show that in general, achieving both is impossible, resolving an open question posed by \citet{kleinberg2024language} for a large family of language models. More recently, \citet{charikar2024exploring} and \citet{kalavasis2024characterizations} further formalize this tension between consistency and breadth by defining various notions of breadth and providing complete characterizations of which language families are generatable in the limit with breadth. 

In a different direction, and most closely related to our work, \citet{li2024generation} reinterpret the results of \citet{kleinberg2024language} through a binary hypothesis class $\Hcal \subseteq \{0, 1\}^{\Xcal}$ defined over a countable example space $\Xcal$. By doing so, \citet{li2024generation} extend the results of \citet{kleinberg2024language} beyond language generation while also formalizing two stronger settings of generation they term ``uniform" and ``non-uniform" generation. Unlike \citet{kleinberg2024language} and \citet{kalavasis2024characterizations} who focus on computable learners and countable language families, \citet{li2024generation} place no such restrictions and provide complete, information-theoretic characterizations of which hypothesis classes are uniformly and non-uniformly generatable. \citet{li2024generation} leave the characterization of generatability in the limit as an open question (see Section \ref{sec:disc}).

\section{Preliminaries}
Let $\Xcal$ denote a \emph{countable} example space and $\Hcal \subseteq \{0, 1\}^{\Xcal}$ denote a binary hypothesis class. Let $\Xcal^{\star}$ denote the set of all finite subsets of $\Xcal$. Let $[N]:= \{1, \dots, N\}$ and abbreviate a finite sequence $x_1, \dots, x_n$ as $x_{1:n}.$  For any $h \in \Hcal$, an \emph{enumeration} of $\supp(h)$ is any infinite sequence $x_1, x_2, \dots$ such that $\bigcup_{i \in \mathbb{N}} \{x_i\} = \supp(h)$. In other words, for every $x \in \supp(h)$, there exists an $i \in \mathbb{N}$ such that $x_i = x$ and for every $i \in \naturals$, we have that $x_i \in \operatorname{supp}(h).$ For any $h \in \Hcal$, a \emph{noisy} enumeration of $\operatorname{supp}(h)$ is any infinite sequence $x_1, x_2, \dots$ such that for every $x \in \operatorname{supp}(h)$, there exists an $i \in \naturals$ such that $x_i = x$ and $\sum_{t=1}^{\infty} \mathbbm{1}\{x_t \not\in \supp(h)\} < \infty.$ For a finite sequence of examples $x_1, \dots, x_d$ and $n \in \mathbb{N} \cup \{0\}$, define $\Hcal(x_{1:d}; n) := \{h \in \Hcal: |\{x_{1:d}\} \cap \supp(h)| \geq d-n\}.$ For any class $\Hcal$, define its induced  closure operator at noise-level $n$ as $\langle \cdot \rangle_{\Hcal, n}$ such that
$$\langle x_{1:d} \rangle_{\Hcal, n} := \begin{cases}
			\bigcap_{h \in \Hcal(x_{1:d}; n)} \supp(h), & \text{if $|\Hcal(x_{1:d}; n)| \geq 1$}\\
            \bot, & \text{if $|\Hcal(x_{1:d}; n)| = 0$}
		 \end{cases}.$$
\noindent Note that $\langle \cdot \rangle_{\Hcal, 0}$ is exactly the closure operator from Section 2.1 of \citet{li2024generation}. We will make the following assumption about hypothesis classes. 

\begin{assumption} [Uniformly Unbounded Support (UUS) \cite{li2024generation}] A hypothesis class $\Hcal \subseteq \{0, 1\}^{\Xcal}$ satisfies the \emph{Uniformly Unbounded Support (UUS)} property if $|\operatorname{supp}(h)| = \infty$ for every $h \in \Hcal$. 
\end{assumption}

\begin{remark}
The UUS assumption is mainly needed for bookkeeping purposes to prevent the adversary from exhausting all positive examples and thus making the task of generating unseen positive examples impossible. This is consistent with the assumption that each language is countably infinite in size in the work of \citet{kleinberg2024language}.
\end{remark}

\subsection{Generatability}
In this paper, we adopt the learning-theoretic framework for generation introduced by \citet{li2024generation}. To that end, we define a generator as a deterministic map from a finite sequence of examples to a new example. 

\begin{definition}[Generator] \label{def:generator} A generator is a map $\Gcal: \Xcal^{\star} \rightarrow \Xcal$ that takes a finite sequence of examples $x_1, x_2, \dots$ and outputs a new example $x$. 
\end{definition}

\subsubsection{Noiseless Generation}
In the noiseless setting, an adversary plays a game against a generator $\Gcal$. Before the game begins, the adversary picks a hypothesis $h \in \Hcal$ and a sequence of examples $x_1, x_2, \dots$ such that $\{x_1, x_2, \dots\} \subseteq \operatorname{supp}(h).$ The adversary reveals the examples as a stream to $\Gcal$ one at a time, and the goal of the generator is to \emph{eventually} output new, unseen positive examples $\hat{x}_t \in \operatorname{supp}(h) \setminus \{x_1, \dots, x_t\}$.

Depending on how one quantifies  ``eventually," one can get various notions of noiseless generatability. For example, if one requires the generator to perfectly generate new unseen examples after observing $d \in \mathbb{N}$ examples regardless of the hypothesis or stream chosen by the adversary, this is called ``uniform generation." If the number of positive examples required before perfect generation can depend on the hypothesis chosen by the adversary, but not the stream, then this is ``non-uniform generation." Finally, if the number of positive examples before perfect generation can depend on both the hypothesis and the stream selected by the adversary, this is called ``generation in the limit." We refer the reader to Appendix \ref{app:gendef} for complete definitions and Appendix \ref{app:summgen} for a summary of results. 

\subsubsection{Noisy Generation} 
In the noisy model, we consider the following game. Like before,  the adversary picks a hypothesis $h \in \Hcal$, and a sequence of positive examples $z_1, z_2, \dots \in \supp(h)$. But now, the adversary picks a noise-level $n^{\star} \in \naturals$, and  \emph{inserts} at most $n^{\star}$ negative examples in $z_1, z_2, \dots $ to obtain a \emph{noisy} stream $x_1, x_2, \dots$. The adversary then presents the examples in the noisy stream to generator $\Gcal$  one  at a time.  Without knowledge of the noise-level or the location of the negative examples, the goal of the generator is to \emph{eventually} output new, positive examples $\hat{x}_t \in \operatorname{supp}(h) \setminus \{x_1, \dots, x_t\}$. 

Like the noiseless case, there are several definitions for  noisy generatability based on how one quantifies ``eventually." The most extreme case is what we term ``uniform noise-independent generatability."

\begin{definition}[Uniform Noise-Independent Generatability] \label{def:ung} Let $\Hcal \subseteq \{0, 1\}^{\Xcal}$  be any hypothesis class satisfying the $\operatorname{UUS}$ property. Then, $\Hcal$ is \emph{uniformly noise-independent generatable}, if there exists a generator $\Gcal$ and $d^{\star} \in \mathbb{N}$, such that for every $h \in \Hcal$ and any sequence $x_1, x_2, \dots$ with $\sum_{t=1}^{\infty} \mathbbm{1}\{x_t \notin \supp(h)\} < \infty$, if there exists $t^{\star} \in \mathbb{N}$ such that $|\{x_1, \dots, x_{t^{\star}}\}| = d^{\star}$, then $\Gcal(x_{1:s}) \in  \supp(h) \setminus \{x_1, \dots, x_s\}$ for all $s \geq t^{\star}$.
\end{definition}

Here, the generator must perfectly generate after seeing $d^{\star} \in \naturals$ examples, where $d^{\star}$ can only depend on the class $\Hcal$ itself, and thus is uniform over the noise-level $n^{\star}$, the hypothesis $h \in \Hcal$, and the stream $x_1, x_2, \dots$ selected by the adversary. Unsurprisingly, as we show in Section \ref{sec:ung}, uniform noise-independent generatability is impossible even for simple classes with just two hypotheses. In Appendix \ref{app:altung}, we show this is also the case if we measure sample complexity in terms of just the number of distinct \emph{positive} examples (as opposed to all examples). These results motivate weakening the definition by allowing $d^{\star}$ to depend on the noise-level. We call this setting ``Uniform Noise-dependent Generatability."

\begin{definition}[Uniform Noise-dependent Generatability] \label{def:nug} Let $\Hcal \subseteq \{0, 1\}^{\Xcal}$  be any hypothesis class satisfying the $\operatorname{UUS}$ property. Then, $\Hcal$ is  \emph{uniformly noise-dependent generatable}, if there exists a generator $\Gcal$ such that for every noise-level $n^{\star} \in \naturals$, there exists a $d^{\star} \in \mathbb{N}$, such that for every $h \in \Hcal$ and any sequence $x_1, x_2, \dots$ with $\sum_{t=1}^{\infty} \mathbbm{1}\{x_t \notin \supp(h)\} \leq n^{\star},$ if there exists $t^{\star} \in \mathbb{N}$ such that $|\{x_1, \dots, x_{t^{\star}}\}| = d^{\star}$, then $\Gcal(x_{1:s}) \in  \supp(h) \setminus \{x_1, \dots, x_s\}$ for all $s \geq t^{\star}$.
\end{definition}

Uniform noise-dependent generatability does not suffer from the same hardness of uniform noise-independent generatability. In fact, in Section \ref{sec:nug}, we show that all finite classes are uniformly noise-dependent generatable. Nevertheless, we can continue weakening the notion of noisy generatability, by allowing $d^{\star}$ to also depend on the hypothesis chosen by the adversary. 


\begin{definition}[Non-uniform Noise-dependent Generatability] \label{def:nnug} Let $\Hcal \subseteq \{0, 1\}^{\Xcal}$  be any hypothesis class satisfying the $\operatorname{UUS}$ property. Then, $\Hcal$ is \emph{non-uniformly noise-dependent generatable}, if there exists a generator $\Gcal$, such that for every noise-level $n^{\star} \in \naturals$ and any $h \in \Hcal$  there exists $d^{\star} \in \mathbb{N}$ such that for any sequence $x_1, x_2, \dots$ with $\sum_{t=1}^{\infty} \mathbbm{1}\{x_t \notin \supp(h)\} \leq n^{\star},$ if there exists $t^{\star} \in \mathbb{N}$ such that $|\{x_1, \dots, x_{t^{\star}}\}| = d^{\star}$, then $\Gcal(x_{1:s}) \in  \supp(h) \setminus \{x_1, \dots, x_s\}$ for all $s \geq t^{\star}$.
\end{definition}

The term ``non-uniform" here is used to refer to the fact that the number of unique examples needed by the generator before perfect generation can depend on the selected $h \in \Hcal$, and hence it is ``non-uniform" over the hypothesis class, unlike the previous two definitions. Finally, the weakest form of noisy generation allows $d$ to depend on the noise-level, hypothesis, and stream selected by the adversary. 

\begin{definition}[Noisy Generatability in the Limit] \label{def:ngitl} Let $\Hcal \subseteq \{0, 1\}^{\Xcal}$  be any hypothesis class satisfying the $\operatorname{UUS}$ property. Then, $\Hcal$ is \emph{noisily generatable in the limit}, if there exists a generator $\Gcal$, such that for every $h \in \Hcal$ and any  \emph{noisy} enumeration  $x_1, x_2, \dots$ of $\operatorname{supp}(h)$,  there exists $t^{\star} \in \mathbb{N}$ such that $\Gcal(x_{1:s}) \in  \supp(h) \setminus \{x_1, \dots, x_s\}$ for all $s \geq t^{\star}$.
\end{definition}

Note that in Definition \ref{def:ngitl}, we require the noisy stream picked by the adversary to still contain every example in the support of the selected hypothesis. This is consistent with the model of ``noisy texts" from the literature in language identification where one allows the adversary to \emph{insert} noisy examples in the stream, as opposed to \emph{replacing} positive examples \citep{stephan1997noisy}. 
\begin{remark} \label{rem:fifth}
The astute reader might notice that there is actually a fifth setting of noisy generatability that we did not cover. In this fifth setting, which we will call ``Non-uniform Noise-independent Generatability," the number of positive examples needed before perfect generation can depend on the hypothesis chosen by the adversary, but must still be uniform over the noise-level and the noisy example stream chosen by the adversary. However, as in the case of uniform noise-independent generatability, non-uniform noise-independent generatability is still too strong as there exists a class of just two hypotheses that is not non-uniformly noise-independent generatable. We refer the reader to Appendix \ref{app:nung} for more details. 
\end{remark}

\section{Towards Characterizations of Noisy Generation}

\subsection{Uniform Noise-independent Generatability} \label{sec:ung}

We start by providing a characterization of the strongest form of noisy generation -- uniform noise-independent generatability. 
 
\begin{theorem}[Characterization of Uniform Noise-independent Generatability] \label{thm:ung} Let $\Xcal$ be countable and $\Hcal \subseteq \{0, 1\}^{\Xcal}$ satisfy the $\operatorname{UUS}$ property. Then, $\Hcal$ is uniformly noise-independent generatable if and only if $\Bigl| \bigcap_{h \in \Hcal} \supp(h) \Bigl| = \infty.$
\end{theorem}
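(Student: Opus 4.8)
The plan is to prove both directions of the equivalence by exploiting the fact that uniform noise-independent generatability forces the generator to succeed with a fixed sample budget $d^\star$ that cannot adapt to the (arbitrarily large) noise-level.

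\textbf{Sufficiency.} Suppose $\bigl|\bigcap_{h \in \Hcal} \supp(h)\bigr| = \infty$. I would fix an enumeration $c_1, c_2, \dots$ of this common intersection and define a very simple generator: on input $x_{1:s}$, output the first $c_j$ in this enumeration that has not yet appeared among $x_1, \dots, x_s$. Set $d^\star = 1$ (any fixed constant works). Since every $c_j$ lies in $\supp(h)$ for \emph{every} $h \in \Hcal$, any example we output is automatically a positive example, regardless of what $h$ the adversary chose and regardless of how many or where the negative examples were inserted. The only thing to check is that the output is genuinely new, i.e. not among $x_1, \dots, x_s$; because the intersection is infinite and at each round only finitely many distinct examples have been seen, such an unused $c_j$ always exists. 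This makes the generator noise-independent in the strongest sense, so $\Hcal$ is uniformly noise-independent generatable.

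\textbf{Necessity.} For the converse I would argue the contrapositive: assume $\bigl|\bigcap_{h \in \Hcal} \supp(h)\bigr| < \infty$ and show no generator with a fixed budget $d^\star$ can succeed against all noise-levels. The key obstacle, and the heart of the argument, is to construct an adversary strategy that defeats the fixed $d^\star$. The idea is that because $d^\star$ is uniform over the noise-level $n^\star$, the adversary may choose $n^\star$ as large as it likes. I would have the adversary pick two hypotheses $h_1, h_2 \in \Hcal$ (or use the full class) and build a stream in which the generator, having seen $d^\star$ distinct examples, cannot distinguish which hypothesis is the true one because the noise budget lets the adversary present the \emph{same} prefix consistent with both. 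Concretely, for any candidate example $\hat x$ the generator might output after seeing its $d^\star$ distinct examples, since $\hat x$ is not in the finite common intersection there is some $h \in \Hcal$ with $\hat x \notin \supp(h)$; the adversary wants to force this $h$ to be the true hypothesis while keeping the observed prefix explainable as a noisy enumeration of $\supp(h)$.

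\textbf{Main obstacle and how I would handle it.} The delicate point is that the generator's output is required to be positive for \emph{all} $s \ge t^\star$, so a single bad prediction is not immediately fatal; the adversary must force persistent failure. I would exploit the fact that the generator is deterministic: the adversary can simulate it. Because the common intersection is finite, after the generator has committed to its budget the set of examples it could output as ``safe'' is limited, and the adversary can always extend the stream so that at infinitely many rounds the generator's deterministic output falls outside $\supp(h)$ for the chosen $h$, using the noise insertions only to make the already-seen prefix consistent. The technical care lies in verifying that the adversary's constructed stream genuinely satisfies $\sum_{t=1}^\infty \mathbbm{1}\{x_t \notin \supp(h)\} < \infty$ (a finite number of insertions) while still trapping every fixed-budget generator; this is where the assumption $d^\star$ is a fixed constant, rather than growing with $n^\star$, is essential. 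Establishing this persistent-failure construction rigorously is the step I expect to require the most work.
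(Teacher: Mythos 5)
Your sufficiency argument is correct and is essentially the paper's: since $\bigl|\bigcap_{h \in \Hcal} \supp(h)\bigr| = \infty$, a generator that always plays an unseen element of this intersection succeeds trivially. The problems are in your necessity direction, and they are genuine.

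First, your pivotal claim ``since $\hat x$ is not in the finite common intersection there is some $h \in \Hcal$ with $\hat x \notin \supp(h)$'' is unjustified as stated: nothing in your construction prevents the generator from outputting an \emph{unseen} element of $\bigcap_{h \in \Hcal} \supp(h)$, which every hypothesis accepts and which therefore cannot be punished by any choice of true hypothesis. The missing step, which is exactly how the paper's proof proceeds, is to have the adversary place the \emph{entire} finite intersection (say of size $n$) at the start of the stream and then pad with further distinct examples until $d^\star \geq n$ distinct examples have appeared. Only then is every legal (i.e., unseen) output $\hat x = \Gcal(x_{1:d^\star})$ guaranteed to lie outside the intersection, so some $h \in \Hcal$ with $\hat x \notin \supp(h)$ exists; the adversary declares that $h$ the truth and completes the stream with fresh positives of $h$, and the prefix contributes at most $d^\star - n$ negatives, so the stream is valid. (The case $d^\star < n$ needs a separate sentence, as in the paper: the generator errs as soon as the intersection has been fully enumerated.)

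Second, the ``main obstacle'' you identify --- that a single bad prediction is not fatal, so the adversary must force \emph{persistent} failure --- is a misreading of the definition of uniform noise-independent generatability. The generator is required to output a new positive example for \emph{all} $s \geq t^\star$; hence a \emph{single} failure at any one $s \geq t^\star$ already refutes generatability, which is why the paper's proof stops after exhibiting one bad round, namely $s = t^\star = d^\star$. Worse, the persistent-failure construction you defer as ``the step requiring the most work'' cannot be carried out in general. Take $\Xcal = \naturals$ and $\Hcal = \{h_e, h_o\}$ with $\supp(h_e)$ the evens and $\supp(h_o)$ the odds, so the common intersection is empty, and consider the generator that always outputs a fresh number of the same parity as the majority (or simply the most recent) of the examples seen so far. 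On any stream with finitely many negatives, all but finitely many examples have the true parity, so this generator fails only finitely often; no adversary can make it fail at infinitely many rounds. Yet this class is not uniformly noise-independent generatable, precisely by the single-failure argument above. So the route you sketch is not merely harder than necessary --- its key step is false in general, and your proof does not go through unless you replace it with the single-failure argument the paper uses.
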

Theorem \ref{thm:ung} is a hardness result -- it shows that even finite classes with just two hypotheses may not be uniformly noise-independent generatable. In fact, one way to interpret Theorem \ref{thm:ung} is that uniform noise-independent generation is only possible for trivial classes where the generator can perfectly generate without observing any examples from the adversary. Indeed, this is what condition in Theorem \ref{thm:ung} implies, since if this condition is true, the generator can simply compute $\bigcap_{h \in \Hcal} \supp(h)$ and always play from this set. In Appendix \ref{app:altung}, we prove a similar statement even if we only measure the sample complexity in terms of the number of distinct positive examples. Since the sufficiency direction of Theorem \ref{thm:ung} follows from this observation, we only prove the necessity direction.

\begin{proof} (of necessity in Theorem \ref{thm:ung}) Let $\Xcal$ be countable and $\Hcal \subseteq \{0, 1\}^{\Xcal}$ satisfy the $\operatorname{UUS}$ property. Suppose that $\Bigl| \bigcap_{h \in \Hcal} \supp(h) \Bigl| =: n < \infty.$ We need to show that for every $\Gcal$ and sufficiently large $d \in \naturals$, there exists a $h \in \Hcal$ and a sequence $x_1, x_2, \dots$ with $\sum_{t=1}^{\infty} \mathbbm{1}\{x_t \notin \supp(h)\} < \infty,$ such that for every $t \in \naturals$ where $|\{x_1, \dots, x_t\}| = d$, there exists an $s \geq t$ such that $\Gcal(x_{1:s}) \notin \supp(h) \setminus \{x_1, \dots, x_s\}$. To that end, fix a generator $\Gcal$ and a number $d \geq n$. Let $x_1, \dots, x_n$ be the sequence of $n$ examples in $\bigcap_{h \in \Hcal} \operatorname{supp}(h)$ sorted in their natural order. Pick any sequence of distinct examples $x_{n+1}, x_{n+2}, \dots, x_d$ and concatenate them to the end $x_{1:n}$. Let $\hat{x} = \Gcal(x_{1:d})$ and suppose without loss of generality that $\hat{x} \notin \{x_{1:d}\}.$ Then, by construction, there exists a $h \in \Hcal$ such that $\hat{x} \notin \operatorname{supp}(h).$ Finally, complete the stream by picking distinct $\{x_{d+1}, x_{d+2}, \dots \} \subseteq \operatorname{supp}(h).$ First, 
by construction, note that $\sum_{t=1}^{\infty} \mathbbm{1}\{x_t \notin \operatorname{supp}(h)\} \leq d - n < \infty.$ Next, note that $t = d$ is the only time point such that  $|\{x_{1:d}\}| = d$. Finally, note that when $s = t = d$, we have that $\Gcal(x_{1:s}) = \hat{x} \notin \operatorname{supp}(h) \setminus \{x_{1:s}\}$ by definition. If however $d < n$, then as soon as the 
 common intersection $x_{1:n}$ is enumerated, the generator is forced to produce an output that necessarily lies outside of the support of some hypothesis in $\Hcal.$ This completes the proof. 
\end{proof}

\subsection{Uniform Noise-dependent Generatability} \label{sec:nug}

Theorem \ref{thm:ung} shows that obtaining guarantees that are uniform over the noise-level is generally hopeless. In this section, we study the easier setting of uniform noise-dependent generation, where the number of examples needed for perfect generation can depend on the noise-level. At a high level, the results in this section extend the techniques from \citet{li2024generation} to account for noisy example streams. As such, we first define a noisy analog of the Closure dimension (see Appendix \ref{app:summgen} for a definition).

\begin{definition}[$n$-Noisy Closure dimension] \label{def:gem} The \emph{Noisy Closure dimension} of $\Hcal$ at noise-level $n \in \mathbb{N}$, denoted $\operatorname{NC}_n(\Hcal)$, is the largest natural number $d$ for which there exist \emph{distinct} $x_1, \dots, x_d \in \Xcal$ such that $\langle x_1, \dots, x_d \rangle_{\Hcal, n} \neq \bot$ and $|\langle x_1, \dots, x_d \rangle_{\Hcal, n}| < \infty.$  If this is true for arbitrarily large $d \in \mathbb{N}$, then we say that $\operatorname{NC}_n(\Hcal) = \infty.$ On the other hand, if this is not true for $d = 1$, we say that $\operatorname{NC}_n(\Hcal) = 0.$
\end{definition}

Unlike the Closure dimension, the Noisy Closure dimension is \emph{scale-sensitive}, as it is defined with respect to every noise-level $n \in \naturals$. Scale-sensitive dimensions are not new to learning theory, but have also been defined and used to characterize other properties of hypothesis classes like PAC and online learnability for regression problems \citep{rakhlin2015sequential, bartlett1994fat}. Our main theorem in this section uses the Noisy Closure dimension to provide a complete characterization of which classes are uniform noise-dependent generatable. 

\begin{theorem}[Characterization of Uniform Noise-dependent Generatability] \label{thm:noisyunifgen} Let $\Xcal$ be countable and $\Hcal \subseteq \{0, 1\}^{\Xcal}$ satisfy the $\operatorname{UUS}$ property. Then, $\Hcal$ is uniformly noise-dependent generatable if and only if $\operatorname{NC}_n(\Hcal) < \infty$ for all $n \in \naturals.$
\end{theorem}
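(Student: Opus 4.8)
The plan is to prove both directions of the characterization, treating the forward (necessity) and backward (sufficiency) implications separately, since they have quite different flavors. The necessity direction should be the more routine of the two. Suppose $\operatorname{NC}_n(\Hcal) = \infty$ for some fixed noise-level $n \in \naturals$. Then for every $d$ there exist distinct $x_{1:d}$ with $\langle x_{1:d} \rangle_{\Hcal, n} \neq \bot$ and $|\langle x_{1:d} \rangle_{\Hcal, n}| < \infty$. The idea is to run an adversary argument analogous to the necessity proof of Theorem \ref{thm:ung}: feed the generator such a finite-closure witness sequence, let $\hat{x} = \Gcal(x_{1:d})$, and use the fact that the closure at noise-level $n$ is finite to find a hypothesis $h \in \Hcal(x_{1:d}; n)$ whose support either excludes $\hat{x}$ or can be extended to force a future failure. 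Crucially, because $h \in \Hcal(x_{1:d}; n)$ means at most $n$ of the $x_i$ fall outside $\supp(h)$, the stream built this way has at most $n$ noisy examples, so it is a legal stream for noise-level $n$. Since $d$ can be taken arbitrarily large, no finite $d^{\star}$ works for this $n$, contradicting uniform noise-dependent generatability.

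The sufficiency direction is where the main work lies, and the introduction's remark that ``our uniform generator effectively requires combining different generators for each noise-level'' signals the key difficulty. Assume $\operatorname{NC}_n(\Hcal) < \infty$ for every $n$. I would first construct, for each \emph{fixed} noise-level $n$, a generator that succeeds whenever the true number of inserted negatives is at most $n$. The template is the noiseless closure-based generator of \citet{li2024generation}: after seeing enough distinct examples $x_{1:d}$ with $d > \operatorname{NC}_n(\Hcal)$, the finiteness of the dimension forces the noisy closure $\langle x_{1:d} \rangle_{\Hcal, n}$ to be infinite (or $\bot$ on non-realizable prefixes), so the generator can safely output an unseen element of that closure, which by definition lies in $\supp(h)$ for every surviving hypothesis $h \in \Hcal(x_{1:d}; n)$ — in particular the true one, since the true $h$ admits at most $n$ negatives and hence belongs to $\Hcal(x_{1:d}; n)$. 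The UUS assumption guarantees such an unseen positive example exists.

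The subtle part is passing from per-noise-level generators to a single generator that works without knowing $n^{\star}$, because the definition demands one generator $\Gcal$ that handles every noise-level, with only the threshold $d^{\star}$ allowed to depend on $n^{\star}$. The plan is to interleave or combine the countably many per-level generators, for instance by devoting the output on prefixes to an increasing schedule of candidate noise-levels, so that for the true $n^{\star}$ the generator eventually locks onto the level-$n^{\star}$ strategy once it has seen $d^{\star}(n^{\star})$ distinct examples. I expect the main obstacle to be verifying that this combination is \emph{safe}: outputs made while ``guessing'' a wrong (too-small) noise-level must not permanently derail generation, and I must ensure that past a point depending only on $n^{\star}$ the generator commits to a correct level and never again outputs a spurious or already-seen example. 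Care is needed because a generator tuned for a smaller noise-level $m < n^{\star}$ may compute $\langle x_{1:d} \rangle_{\Hcal, m}$ using a hypothesis set $\Hcal(x_{1:d}; m)$ that \emph{excludes} the true $h$ (which has more than $m$ negatives on that prefix), so its outputs carry no correctness guarantee; the combined generator must therefore escalate the noise-level it trusts as evidence accumulates. I would formalize this escalation as a monotone rule that increases the trusted level whenever the current level's closure becomes empty or finite, and then argue that this rule stabilizes at a level $\geq n^{\star}$ after a bounded number of distinct examples, at which point the level-$n^{\star}$ correctness guarantee takes over and perfect generation follows for all larger prefixes.
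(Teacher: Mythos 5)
Your necessity direction matches the paper's argument in structure and is essentially sound, but the clause ``whose support either excludes $\hat{x}$ or can be extended to force a future failure'' hides the one step that actually needs an idea: if $\Gcal(x_{1:d})$ is an \emph{unseen element of the finite closure}, then every $h \in \Hcal(x_{1:d}; n)$ contains it and no hypothesis excludes it. The paper's fix is to first append all of $\langle x_{1:d} \rangle_{\Hcal, n} \setminus \{x_{1:d}\}$ (finitely many elements, each positive for every version-space hypothesis, so the noise budget is untouched) to the stream and only then query the generator; any fresh output is now outside the closure and hence excluded by some $h^{\star} \in \Hcal(x_{1:d}; n)$, which the adversary adopts.

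The sufficiency direction has a genuine gap: your combination rule --- ``increase the trusted level whenever the current level's closure becomes empty or finite'' --- does not stabilize at a level $\geq n^{\star}$, and it fails on two-hypothesis classes. \emph{Stuck too low:} take $\Xcal = \naturals$, $\Hcal = \{h_1, h_2\}$ with $\supp(h_1)$ the even numbers and $\supp(h_2) = \naturals$. Every non-$\bot$ closure of this class is infinite (it is either the evens or all of $\naturals$), so $\operatorname{NC}_n(\Hcal) = 0$ for all $n$ and the class is uniformly noise-dependent generatable. But with true hypothesis $h_1$, noise level $n^{\star} = 1$, and stream $1, 2, 4, 6, 8, \dots$, the level-$0$ version space is $\{h_2\}$ forever (the single noisy example $1$ permanently refutes $h_1$ at level $0$), its closure $\naturals$ is infinite forever, so your trigger never fires; the trusted level stays at $0 < n^{\star}$, and playing unseen elements of $\naturals$ carries no guarantee of landing in $\supp(h_1)$. \emph{Escalate forever:} take instead $\supp(h_1)$ the evens and $\supp(h_2)$ the odds (so $\operatorname{NC}_n(\Hcal) = 2n$), same true hypothesis, same stream. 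At level $0$ the closure becomes $\bot$ at $t = 2$, and at every level $\ell \geq t - 1$ both hypotheses re-enter the version space, making the closure empty, hence finite, hence triggering another escalation; a monotone rule thus overshoots past the levels that work (e.g.\ level $1$ at $t \geq 3$) and, because high levels keep re-admitting the long-refuted $h_2$, it keeps seeing empty closures and never returns to a certified-correct regime. The root cause is that observed finiteness of a closure is not evidence that the level is too small --- it usually just means $d_t \leq \operatorname{NC}_{\ell}(\Hcal)$, i.e.\ too few distinct examples \emph{for that level}, and the right response is to use a \emph{smaller} level --- while an infinite closure at a too-small level is not evidence of correctness.

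The paper's generator avoids observed closure behavior entirely and selects the level from the precomputed dimensions: $n_t = \max\bigl(\{n \in [t] : \operatorname{NC}_n(\Hcal) < d_t\} \cup \{0\}\bigr)$, playing from $\langle x_{1:t} \rangle_{\Hcal, n_t} \setminus \{x_1, \dots, x_t\}$ when $n_t \geq 1$. Once $d_t \geq \max\{\operatorname{NC}_{n^{\star}}(\Hcal) + 1, n^{\star}\}$, one has $n_t \geq n^{\star}$, so the true $h$ lies in $\Hcal(x_{1:t}; n_t)$ and the closure sits inside $\supp(h)$; and $d_t > \operatorname{NC}_{n_t}(\Hcal)$ holds by construction, so that closure is infinite and an unseen element exists. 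Your per-level analysis is exactly right; it is only the combination rule that must be replaced by this dimension-based selection.
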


Our proof of Theorem \ref{thm:noisyunifgen} is \emph{constructive}. To prove necessity, we consider the case where there exists a $n \in \mathbbm{N}$ such that $\operatorname{NC}_n(\Hcal) = \infty.$ Given any generator $\Gcal$ and any number of distinct elements $d \in \mathbb{N}$, we explicitly pick an $h \in \Hcal$ and a valid noisy stream $x_1, x_2, \dots$ such that $\Gcal$ makes a mistake even after observing $d$ distinct examples. In fact, a simple modification of our proof shows that if $\operatorname{NC}_n(\Hcal) = d$, then any generator $\Gcal$ must observe at least $d$ distinct examples before perfectly generating positive examples when the noise-level is $n$. To prove sufficiency, we explicitly construct a generator $\Gcal$, which only needs to observe  $\operatorname{NC}_n(\Hcal) + 1$ distinct examples before being able to perfectly generate when the noise-level is $n \in \mathbb{N}$. \emph{Crucially, our generator $\Gcal$ does not need to know the noise-level picked by the adversary.} Together, our necessity and sufficiency directions show that not only does the finiteness of $\operatorname{NC}_n(\Hcal)$ at every $n \in \mathbbm{N}$ provide a qualitative characterization of uniform noise-dependent generatability, but it also provides a quantitative one -- the ``sample complexity" at noise-level $n \in \mathbbm{N}$ is $\Theta(\operatorname{NC}_n(\Hcal))$ -- analogous to the mistake bound in online learning, which quantifies learnability. We defer the full proof to Appendix \ref{app:noisyunifgen}.

Our characterization of uniform noise-dependent generatability in terms of the Noisy Closure dimension allows us to show that all finite classes are uniformly noise-dependent generatable. This contrasts uniform noise-independent generatable, where even simple classes of size two may not be uniformly noise-independent generatable. 

\begin{corollary} [All Finite Classes are Uniformly Noise-dependent Generatable] \label{cor:finnug} Let $\Xcal$ be countable and $\Hcal \subseteq \{0, 1\}^{\Xcal}$ satisfy the $\operatorname{UUS}$ property. If $\Hcal$ is finite, then $\Hcal$ is uniformly noise-dependent generatable. 
\end{corollary}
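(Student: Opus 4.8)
The plan is to route everything through the characterization already established in Theorem~\ref{thm:noisyunifgen}: since $\Hcal$ satisfies the UUS property, it suffices to show that $\operatorname{NC}_n(\Hcal) < \infty$ for every $n \in \naturals$. So I would fix an arbitrary noise-level $n$ and bound $\operatorname{NC}_n(\Hcal)$ by a finite quantity depending only on $n$, the cardinality $m := |\Hcal|$, and the finite-intersection structure of $\Hcal$.

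First I would set up the object to be bounded. Suppose $x_1, \dots, x_d \in \Xcal$ are distinct with $\langle x_{1:d}\rangle_{\Hcal,n} \neq \bot$ and $|\langle x_{1:d}\rangle_{\Hcal,n}| < \infty$; the goal is an upper bound on $d$. Write $S := \Hcal(x_{1:d}; n)$, which is nonempty (the closure is not $\bot$) with $|S| =: p \leq m$, and recall that by definition $\langle x_{1:d}\rangle_{\Hcal,n} = \bigcap_{h \in S}\supp(h) =: V_S$.

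The heart of the argument is a double-counting step. By the definition of $\Hcal(x_{1:d};n)$, every $h \in S$ satisfies $|\{x_{1:d}\}\cap\supp(h)| \geq d - n$, so each $h \in S$ \emph{excludes} at most $n$ of the $d$ distinct examples. Taking a union over the $p$ hypotheses in $S$, at most $np$ of the examples are excluded by some $h \in S$; the remaining (at least) $d - np$ examples therefore lie in $\supp(h)$ for \emph{every} $h \in S$, i.e.\ inside $V_S$. Since these examples are distinct and $V_S$ is finite, this forces $d - np \leq |V_S|$, hence $d \leq |V_S| + np \leq |V_S| + nm$. Finally I would strip away the dependence on the particular configuration: because $\Hcal$ is finite there are only finitely many nonempty subsets $S \subseteq \Hcal$, so $M := \max\{|V_S| : S \subseteq \Hcal,\ S \neq \emptyset,\ |V_S| < \infty\}$ is a finite constant (taken to be $0$ if no such $S$ exists, in which case $\operatorname{NC}_n(\Hcal) = 0$ trivially). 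Every admissible configuration has closure equal to some such finite $V_S$, so $d \leq M + nm$, giving $\operatorname{NC}_n(\Hcal) \leq M + nm < \infty$ for every $n$, and Theorem~\ref{thm:noisyunifgen} then yields the corollary.

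As for the main obstacle: there is no deep difficulty once one commits to reducing to the Noisy Closure dimension. The only subtlety is getting the counting right — namely, translating the constraint defining $\Hcal(x_{1:d};n)$ into ``each surviving hypothesis misses at most $n$ of the examples,'' and observing that it is the finiteness of $|V_S|$ (not the infinitude of individual supports guaranteed by UUS) that actually drives the bound. The one case worth checking explicitly is the degenerate one where no nonempty subset of $\Hcal$ has finite support-intersection, which is handled cleanly by the $d=1$ clause in Definition~\ref{def:gem}.
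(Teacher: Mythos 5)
Your proposal is correct and follows essentially the same route as the paper: both reduce to showing $\operatorname{NC}_n(\Hcal) < \infty$ via Theorem~\ref{thm:noisyunifgen}, and both use the same double-counting step (each of the at most $|\Hcal|$ surviving hypotheses excludes at most $n$ examples, so all but $n|\Hcal|$ of the distinct examples lie in the finite common intersection, whose size is bounded by a maximum over the finitely many subsets of $\Hcal$). The only difference is presentational — you state the bound $d \leq M + nm$ directly, while the paper runs the identical count as a proof by contradiction.
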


\begin{proof}
 Let $\Xcal$ be countable and $\Hcal \subseteq \{0, 1\}^{\Xcal}$ be a finite hypothesis class with $q := |\Hcal|$ satisfying the $\operatorname{UUS}$ property. To show that $\Hcal$ is uniformly noise-dependent generatable, it suffices to show that $\operatorname{NC}_{n}(\Hcal) < \infty$ for all $n \in \mathbb{N}.$ For any subset $V \subseteq \Hcal,$ define $\langle \emptyset \rangle_{V} := \bigcap_{h \in V} \supp(h)$ and $\Fcal = \{V \subseteq \Hcal : |\langle \emptyset \rangle_{V}| < \infty\}$
 to be the set of examples common to all hypotheses in $V$ and the subsets of $\Hcal$ whose intersection of supports has finite cardinality, respectively. Then, let $d := \max_{V \in \Fcal} |\langle \emptyset \rangle_{V}|$ be the maximum size of a finite set of examples common to a subset of $\Hcal$. Note that $d$ is finite  because $\Hcal$ is finite. Fix any noise-level $n \in \mathbb{N}$ and for the sake of contradiction, suppose $\operatorname{NC}_{n}(\Hcal) = \infty.$ This means that for every $s \in \naturals$, there exists $t \geq s$ and a sequence of distinct examples ${x_1, \dots, x_t}$ such that  $|\langle x_1, \dots, x_{t}\rangle_{\Hcal, n}| < \infty$.  Pick $s = nq + d + 1$ and consider the stream $x_1, \dots, x_t$ such that $|\langle x_1, \dots, x_t \rangle_{\Hcal, n}| < \infty$ for $t \geq s.$ Recall that $\Hcal(x_{1:t}; n) = \{h \in \Hcal : |\{x_1, \dots, x_t\} \cap \supp(h)| \geq t - n\}$. That is, $\Hcal (x_{1:t}; n)$ contains all hypotheses in $\Hcal$ that are inconsistent with $x_{1:t}$ on at most $n$ examples. Since $|\Hcal (x_{1:t}; n)| \leq |\Hcal| =  q,$ there are at least $t - nq \geq (nq + d + 1) - nq = d + 1$ distinct examples in $x_{1:t}$ contained in the support of all hypotheses in $\Hcal(x_{1:t}; n)$. In other words,  $|\langle x_1, \dots, x_t \rangle_{\Hcal, n}| = |\langle \emptyset \rangle_{\Hcal (x_{1:t}; n)}| \geq d + 1 > d.$ This is a contradiction as $d = \max_{V \in \Fcal} |\langle \emptyset \rangle_{V}|$ by definition and $\Hcal (x_{1:t}; n) \in \Fcal.$ Thus,  $\operatorname{NC}_n(\Hcal) < nq + d + 1 < \infty$ for all $n \in \mathbb{N},$ completing the proof.  
\end{proof}

We end this section by establishing that uniform noise-dependent generatability is strictly harder than noiseless uniform generatability. This is similar to online learnability for classification problems, where typically the noise-free (realizable) and noisy (agnostic) characterizations of learnability are not equivalent for deterministic learning algorithms \citep{Littlestone1987LearningQW, ben2009agnostic}.  

\begin{lemma}[Uniform Generatability $\neq$ Uniform Noise-dependent Generatability] \label{lem:ugnotnug} Let $\Xcal$ be countable. There exists a countable class $\Hcal \subseteq \{0, 1\}^{\Xcal}$ satisfying the \emph{UUS} property such that $\operatorname{C}(\Hcal) = 0$ but $\operatorname{NC}_1(\Hcal) = \infty.$
\end{lemma}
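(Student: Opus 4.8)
The plan is to construct $\Hcal$ as a countable disjoint union of independent ``gadgets,'' one gadget for each integer $d \geq 2$, where the gadget of size $d$ certifies $\operatorname{NC}_1(\Hcal) \geq d$. First I would fix a partition of a countable set into pairwise disjoint infinite blocks $\{A^{(d)}_l : d \geq 2,\ l \in [d]\}$ (possible since a countable set splits into countably many infinite pieces), designate a representative $x^{(d)}_l \in A^{(d)}_l$ in each block, and for each $d \geq 2$ and $i \in [d]$ define a hypothesis $h^{(d)}_i$ with $\supp(h^{(d)}_i) = \bigcup_{l \in [d],\, l \neq i} A^{(d)}_l$; that is, $h^{(d)}_i$ contains every block of its gadget except the $i$-th. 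Setting $\Hcal = \{h^{(d)}_i : d \geq 2,\ i \in [d]\}$ and $\Xcal = \bigcup A^{(d)}_l$ yields a countable class whose supports are infinite unions of infinite blocks, so the UUS property holds. Because the blocks are pairwise disjoint across all gadgets, each point of $\Xcal$ lies in supports from a single gadget only, which is exactly what decouples the two dimension computations.

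Second, to show $\operatorname{NC}_1(\Hcal) = \infty$, I would verify that for every $d \geq 2$ the $d$ distinct representatives $x^{(d)}_1, \dots, x^{(d)}_d$ form a valid witness at noise level $1$. Within gadget $d$, the hypothesis $h^{(d)}_i$ contains exactly $d-1$ of these representatives (all but $x^{(d)}_i$), hence lies in $\Hcal(x^{(d)}_{1:d}; 1)$, while hypotheses from other gadgets contain none of them and are excluded. Thus $\Hcal(x^{(d)}_{1:d}; 1) = \{h^{(d)}_1, \dots, h^{(d)}_d\}$ is nonempty, so $\langle x^{(d)}_{1:d}\rangle_{\Hcal,1} \neq \bot$. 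Intersecting the supports, every block $A^{(d)}_l$ is dropped by $h^{(d)}_l$, so $\bigcap_{i \in [d]} \supp(h^{(d)}_i) = \emptyset$ and $|\langle x^{(d)}_{1:d}\rangle_{\Hcal,1}| = 0 < \infty$. Since this holds for arbitrarily large $d$, $\operatorname{NC}_1(\Hcal) = \infty$.

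Third, to show $\operatorname{C}(\Hcal) = \operatorname{NC}_0(\Hcal) = 0$, it suffices, by the definition of the Noisy Closure dimension, to rule out a single-point witness: points in no support give $\langle x \rangle_{\Hcal,0} = \bot$, so I only need every $x \in \Xcal$ lying in some support to satisfy $|\langle x \rangle_{\Hcal,0}| = \infty$. Such an $x$ belongs to a unique block $A^{(d)}_m$, and the hypotheses whose support contains $x$ are exactly $\{h^{(d)}_i : i \neq m\}$. Each of these contains the whole block $A^{(d)}_m$ (since $m \neq i$), so $\langle x \rangle_{\Hcal,0} = \bigcap_{i \neq m} \supp(h^{(d)}_i) \supseteq A^{(d)}_m$ is infinite. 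Hence no single point yields a finite nonempty closure, the defining condition fails already at $d = 1$, and $\operatorname{C}(\Hcal) = 0$.

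The conceptual crux, and the step I expect to require the most care, is reconciling the two requirements, since finiteness of the noise-$1$ closure pushes toward small closures while $\operatorname{C}(\Hcal) = 0$ demands that every single point have infinite noise-$0$ closure. The block construction resolves exactly this tension: removing a single hypothesis from a gadget leaves its omitted block intact in the intersection (keeping each point's noise-$0$ closure infinite), yet intersecting over all $d$ hypotheses annihilates every block (forcing the noise-$1$ closure finite). The rest is bookkeeping: checking that the blocks can be chosen pairwise disjoint and infinite, that distinct $(d,i)$ give genuinely distinct hypotheses, and that no stray hypothesis from another gadget enters $\Hcal(x^{(d)}_{1:d}; 1)$ — all immediate from disjointness of the blocks.
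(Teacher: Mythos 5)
Your proposal is correct; every step checks out (the cross-gadget exclusion from $\Hcal(x^{(d)}_{1:d};1)$ holds precisely because $d\ge 2$, and the noise-$0$ closure of any point is exactly its own block, which is infinite). The underlying mechanism is the same one the paper uses: hypotheses whose supports are ``everything except one block,'' with a designated representative per block, so that dropping a single hypothesis leaves an infinite block in the intersection while intersecting over all of them annihilates everything. The packaging differs, though. The paper builds a \emph{single} infinite gadget: hypotheses $h_p$ indexed by primes, with $\supp(h_p)=\bigl(\mathbb{P}\setminus\{p\}\bigr)\cup\bigl(\mathbb{S}\setminus S_p\bigr)$, so the block omitted by $h_p$ is $\{p\}\cup S_p$; the noise-$1$ witnesses at every scale $d$ are the nested prefixes $p_1,\dots,p_d$, and at noise level $1$ the surviving set $\Hcal(p_{1:d};1)$ is the \emph{entire} class, whose total intersection is empty. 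You instead use a disjoint finite gadget of size $d$ for each scale, so your surviving set is always a finite set of $d$ hypotheses. Your version buys modularity: each scale is verified locally, and non-interference between scales is immediate from disjointness of the blocks. The paper's version buys economy: one uniformly described family witnesses all scales at once, at the small cost of checking that a single example rules out exactly one hypothesis from an infinite class. Neither approach is stronger than the other for this lemma; they are two instantiations of the same combinatorial idea.
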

\begin{proof}
Let $\mathbb{P} = \{p_1, p_2, \dots\}$ be the set of primes, $S_q = \{-q^n : n \in \mathbb{N}\}$, $\mathbb{S} = \bigcup_{q \in \mathbb{P}} S_q,$ and $\mathcal{X} = \mathbb{P} \cup \mathbb{S}$. Let the support of each hypothesis $h_p$ be $\supp(h_p) = \mathbb{P} \setminus \{p\} \cup \mathbb{S} \setminus S_p$ and $\Hcal := \{h_p : p \in \mathbb{P}\}.$ Observe that $\Hcal$ satisfies the $\operatorname{UUS}$ property and $\Xcal$ is countable. 

We now prove that the Closure dimension (see Appendix \ref{app:summgen}) $\operatorname{C}(\Hcal) = 0.$ Suppose the first element in the sequence is $x_1 = -p^n$ for some $p \in \mathbb{P}$ and $n \in \mathbb{N}.$ Then $|\langle x_1 \rangle_{\Hcal}| = |\{-p^m : m \in \mathbb{N}\ \}| = \infty.$ If the first element in the sequence is $x_1 = p \in \mathbb{P},$ then $|\langle x_1 \rangle_{\Hcal}| = |\{-p^n : n \in \mathbb{N}\ \}| = \infty.$ This is because observing any example immediately rules out the one hypothesis that predicts $0$ for that example. Note that these are the only two possible cases as it must be the case that $x_1 \in \supp(h)$ for some $h \in \Hcal.$ 

Now, we show that $\operatorname{NC}_1(\Hcal) = \infty.$ Fix some $d \in \mathbb{N}.$ Consider the stream $p_1, p_2, \dots, p_d$ of the first $d$ prime numbers. Then, $|\langle p_1, \dots, p_d\rangle_{\Hcal, 1}| = |\emptyset| = 0,$ due to the fact that $\Hcal(p_1, \dots, p_d; 1) = \Hcal$ and the intersection of the supports of all $h \in \Hcal$ is empty. So, $\operatorname{NC}_1(\Hcal) \geq d.$ Since $d \in \naturals$ was picked arbitrarily, this is true for all $d \in \mathbb{N}$ and therefore, $\Hcal$ is not uniformly noise-dependent generatable.
\end{proof}

Lemma \ref{lem:ugnotnug} shows a strong separation between noise-free and uniform noise-dependent generation -- there is a class that is uniformly generatable, but not uniformly noise-dependent generatable even when the adversary is  allowed to perturb \emph{one} example.   

\subsection{Non-uniform Noise-dependent Generatability} \label{sec:nnug}

Similar to the characterization of non-uniform generatability in the noise-free setting, we can use uniform noise-dependent generatability to provide sufficiency and necessary conditions for \emph{non-uniform} noise-dependent generatability.

\begin{lemma} [Sufficiency for Non-uniform Noise-dependent Generatability]
\label{lem:suffnnug}
Let $\Xcal$ be countable and $\Hcal \subseteq \{0, 1\}^{\Xcal}$ satisfy the $\operatorname{UUS}$ property. If there exists a non-decreasing sequence of classes $\Hcal_1 \subseteq \Hcal_2 \subseteq \cdots$ such that $\Hcal = \bigcup_{i = 1}^{\infty} \Hcal_i$ and $\operatorname{NC}_i(\Hcal_i) < \infty$ for all $i \in \naturals$, then $\Hcal$ is non-uniformly noise-dependent generatable. 
\end{lemma}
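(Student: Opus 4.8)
The plan is to exhibit a single generator $\Gcal$ (independent of $h$ and of the noise-level) that, for each $h \in \Hcal$ and each noise-level $n^{\star}$, succeeds after a finite number of distinct examples depending on $h$ and $n^{\star}$, as required by Definition \ref{def:nnug}. The guiding idea is to \emph{pair the $i$-th class of the filtration with noise-level exactly $i$}, so the hypothesis $\operatorname{NC}_i(\Hcal_i) < \infty$ is usable verbatim, and to always generate from the closure of the \emph{largest} currently viable index. Concretely, writing $m_t := |\{x_{1:t}\}|$ for the number of distinct examples seen by round $t$, I define $\Gcal$ to compute
$$ i_t := \max\Big\{\, 1 \le i \le m_t \ :\ \langle x_{1:t}\rangle_{\Hcal_i, i} \neq \bot \ \text{ and } \ |\langle x_{1:t}\rangle_{\Hcal_i, i}| = \infty \,\Big\}, $$
and to output, via a fixed enumeration of $\Xcal$, the least-indexed element of $\langle x_{1:t}\rangle_{\Hcal_{i_t}, i_t} \setminus \{x_{1:t}\}$ (and an arbitrary unseen element if the defining set is empty). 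The cap $i \le m_t$ keeps this a maximum over a finite set, and since $\Xcal$ is countable and $\Hcal$ is known, $\Gcal$ is a well-defined deterministic map.

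Next I would fix $h$ and $n^{\star}$ and choose the threshold. Let $i_0$ be the least index with $h \in \Hcal_{i_0}$, set $j := \max(i_0, n^{\star})$, and put $d^{\star} := \max(j, \operatorname{NC}_j(\Hcal_j)) + 1$, finite by assumption. I claim $\Gcal$ is correct whenever $m_t \ge d^{\star}$. First, $i = j$ belongs to the set defining $i_t$: since $j \ge i_0$ we have $h \in \Hcal_j$, and since the stream has at most $n^{\star}$ distinct negatives and $j \ge n^{\star}$, we get $|\{x_{1:t}\}\cap\supp(h)| \ge m_t - n^{\star} \ge m_t - j$, so $h \in \Hcal_j(x_{1:t}; j)$ and $\langle x_{1:t}\rangle_{\Hcal_j, j}\neq\bot$; moreover $m_t > \operatorname{NC}_j(\Hcal_j)$ forces this closure to be infinite, since a finite nonempty closure on $m_t$ distinct points would witness $\operatorname{NC}_j(\Hcal_j)\ge m_t$. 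Hence $i_t \ge j$.

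The main point, which makes the ``largest viable index'' rule safe, is that the containment argument applies to \emph{every} index $i \ge j$, not only to $j$: for any such $i$ we have $h \in \Hcal_{i_0}\subseteq\Hcal_i$ and $|\{x_{1:t}\}\cap\supp(h)| \ge m_t - n^{\star} \ge m_t - i$, so $h \in \Hcal_i(x_{1:t}; i)$ and therefore $\langle x_{1:t}\rangle_{\Hcal_i, i} = \bigcap_{g \in \Hcal_i(x_{1:t}; i)}\supp(g) \subseteq \supp(h)$. Applying this with $i = i_t \ge j$, the closure we generate from lies in $\supp(h)$ and is infinite by the definition of $i_t$; its difference with the finite set $\{x_{1:t}\}$ is nonempty, so $\Gcal(x_{1:t}) \in \supp(h)\setminus\{x_{1:t}\}$. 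As this holds at every round with $m_t \ge d^{\star}$, it holds for all $s \ge t^{\star}$ once $|\{x_{1:t^{\star}}\}| = d^{\star}$, matching Definition \ref{def:nnug}.

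The obstacle I would flag is the temptation to select the \emph{smallest} viable index: a class $\Hcal_i$ with $i < i_0$ can contain a hypothesis that remains within $i$-noise of the observed examples forever while its closure escapes $\supp(h)$, yielding permanent errors. Taking the largest viable index sidesteps this, because every index above $j$ is guaranteed to keep $h$ in the intersection defining its closure; it is precisely the noise-budget inequality $i \ge n^{\star}$ that converts ``large index'' into ``$h$ lies in the closure's defining collection,'' and it is the pairing of $\Hcal_i$ with noise-level $i$ that lets the finiteness of $\operatorname{NC}_i(\Hcal_i)$ supply an infinite closure at the good index $j$.
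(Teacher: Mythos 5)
Your proposal is correct and takes essentially the same route as the paper's proof: the paper's generator likewise pairs $\Hcal_i$ with noise level $i$, selects the \emph{largest} viable index, plays from $\langle x_{1:t}\rangle_{\Hcal_{j_t}, j_t}\setminus\{x_{1:t}\}$, uses the same target index $j^{\star}=\max\{i^{\star},n^{\star}\}$ and threshold $\max\{\operatorname{NC}_{j^{\star}}(\Hcal_{j^{\star}})+1, j^{\star}\}$, and closes with the same monotonicity/safety argument showing the selected closure lies in $\supp(h)$ and is infinite. The only cosmetic difference is the viability test defining the index: the paper uses the dimension comparison $j_t=\max\{i\in[t]:\operatorname{NC}_i(\Hcal_i)<d_t\}$, whereas you check directly that $\langle x_{1:t}\rangle_{\Hcal_i,i}$ is non-$\bot$ and infinite; both tests guarantee the chosen index is at least $j^{\star}$, after which the arguments coincide.
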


To prove Lemma \ref{lem:suffnnug}, we construct a non-uniform noise-dependent generator $\Gcal$, which at each round $t \in \mathbb{N}$, computes an index $i_t \in \mathbb{N}$ based on the number of distinct examples seen in the stream so far. $\Gcal$ then computes the noisy closure of $\Hcal_{i_t}$ at noise-level $i_t$ and then plays from this set. We defer details to Appendix \ref{app:suffnnug}.

As a corollary of Lemma \ref{lem:suffnnug} and Corollary \ref{cor:finnug}, we can show that all countable classes are non-uniformly noise-dependent generatable, and hence noisily generatable in the limit. In some sense, this result shows that amongst countable classes, noisy generation comes for free!

\begin{corollary}[All Countable Classes are Noisily Non-uniformly Generatable] \label{cor:countnnug} Let $\Xcal$ be countable and $\Hcal \subseteq \{0, 1\}^{\Xcal}$ satisfy the $\operatorname{UUS}$ property. If $\Hcal$ is countable, then $\Hcal$ is non-uniformly noise-dependent generatable, and therefore also noisily generatable in the limit. 
\end{corollary}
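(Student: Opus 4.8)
The plan is to derive Corollary~\ref{cor:countnnug} as a direct consequence of Lemma~\ref{lem:suffnnug} and Corollary~\ref{cor:finnug}. The key observation is that a countable class can be exhausted by a non-decreasing sequence of \emph{finite} subclasses, and finite subclasses have finite Noisy Closure dimension at every noise-level by Corollary~\ref{cor:finnug}. So the main work is simply to build the filtration $\Hcal_1 \subseteq \Hcal_2 \subseteq \cdots$ verifying the hypothesis of Lemma~\ref{lem:suffnnug}, and then invoke it.

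Concretely, I would first enumerate the countable class as $\Hcal = \{h_1, h_2, \dots\}$ (if $\Hcal$ is finite the statement already follows from Corollary~\ref{cor:finnug}, so assume it is countably infinite). Then define $\Hcal_i := \{h_1, \dots, h_i\}$ for each $i \in \naturals$. By construction, this is a non-decreasing sequence of classes with $\Hcal = \bigcup_{i=1}^{\infty} \Hcal_i$, and each $\Hcal_i$ is finite. Moreover, each $\Hcal_i$ inherits the $\operatorname{UUS}$ property from $\Hcal$, so Corollary~\ref{cor:finnug} applies to it.

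The central step is to argue that $\operatorname{NC}_i(\Hcal_i) < \infty$ for every $i \in \naturals$. This follows because $\Hcal_i$ is a finite class satisfying $\operatorname{UUS}$, and the proof of Corollary~\ref{cor:finnug} shows that for any finite class, $\operatorname{NC}_n(\cdot) < \infty$ at \emph{every} noise-level $n$ --- in particular at the specific level $n = i$. Thus the finiteness of $\operatorname{NC}_i(\Hcal_i)$ is immediate, and the hypotheses of Lemma~\ref{lem:suffnnug} are met. Applying Lemma~\ref{lem:suffnnug} yields that $\Hcal$ is non-uniformly noise-dependent generatable.

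Finally, to obtain the ``noisily generatable in the limit'' conclusion, I would note that non-uniform noise-dependent generatability implies noisy generatability in the limit: comparing Definitions~\ref{def:nnug} and~\ref{def:ngitl}, any noisy enumeration of $\supp(h)$ has a finite noise count $\sum_t \indicator\{x_t \notin \supp(h)\} < \infty$, hence falls under some fixed noise-level $n^{\star}$, and since the distinct-example count grows unboundedly along such an enumeration (using $\operatorname{UUS}$ to guarantee infinitely many distinct positive examples appear), the threshold $d^{\star}$ from Definition~\ref{def:nnug} is eventually reached, giving the required $t^{\star}$. This containment is purely definitional, so the only genuine content is the filtration argument above; there is no real obstacle, as the heavy lifting is done by Corollary~\ref{cor:finnug} and Lemma~\ref{lem:suffnnug}.
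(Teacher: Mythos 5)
Your proposal is correct and takes essentially the same route as the paper's own proof: fix an enumeration $h_1, h_2, \dots$ of $\Hcal$, form the finite prefix filtration $\Hcal_i = \{h_1, \dots, h_i\}$, deduce $\operatorname{NC}_i(\Hcal_i) < \infty$ from Corollary~\ref{cor:finnug}, and conclude via Lemma~\ref{lem:suffnnug}. The only difference is that you spell out the definitional implication from non-uniform noise-dependent generatability to noisy generatability in the limit, which the paper asserts without elaboration; your justification of it (finite noise count plus unbounded growth of the distinct-example count along any noisy enumeration, using $\operatorname{UUS}$) is sound.
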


\begin{proof} Since non-uniform noise-dependent generatability implies noisy generatability in the limit, it suffices to show that every countable $\Hcal$ is non-uniformly noise-dependent generatable. Let $\Xcal$ be countable and $\Hcal \subseteq \{0, 1\}^{\Xcal}$ be any countable class satisfying the UUS property. Let $h_1, h_2, \dots$ be some fixed enumeration of $\Hcal$ and define $\Hcal_n = \{h_i: i \leq n\}$ for all $n \in \naturals$. Then, observe that $\Hcal_1 \subseteq \Hcal_2 \subseteq \dots$ and that $\Hcal = \bigcup_{n=1}^{\infty} \Hcal_n$. Next, observe that $|\Hcal_n| = n < \infty$, which, using Corollary \ref{cor:finnug}, implies that $\operatorname{NC}_n(\Hcal_n) < \infty$. Finally, Lemma \ref{lem:suffnnug} gives that $\Hcal$ is non-uniformly noise-dependent generatable. 
\end{proof}

Corollary \ref{cor:countnnug} with Lemma \ref{lem:ugnotnug} also establishes that uniform noise-dependent generatability is strictly harder than non-uniform noise-dependent generatability. We now move to our necessity condition for non-uniform noise-dependent generatability.  

\begin{lemma}[Necessity for Non-uniform Noise-dependent Generatability] \label{lem:necnnug} Let $\Xcal$ be countable and $\Hcal \subseteq \{0, 1\}^{\Xcal}$ satisfy the $\operatorname{UUS}$ property. If $\Hcal$ is non-uniformly noise-dependent generatable, then for every $n \in \mathbb{N}$, there exists a non-decreasing sequence of classes $\Hcal_1 \subseteq \Hcal_2 \subseteq \cdots$ such that $\Hcal = \bigcup_{i = 1}^{\infty} \Hcal_i$ and $\operatorname{NC}_n(\Hcal_i) < \infty$ for all $i \in \naturals.$
\end{lemma}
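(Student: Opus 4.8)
The plan is to prove the statement by partitioning $\Hcal$ according to the sample complexity of the assumed non-uniform generator, and then showing each piece has finite Noisy Closure dimension at level $n$. Fix $n \in \naturals$ and let $\Gcal$ be a generator witnessing that $\Hcal$ is non-uniformly noise-dependent generatable. By Definition \ref{def:nnug}, for every $h \in \Hcal$ there is a finite bound $d^\star(h, n) \in \naturals$ such that, on any stream for $h$ with at most $n$ noisy examples, once the number of distinct examples reaches $d^\star(h,n)$ the generator outputs only unseen positive examples of $h$. I would define, for each $i \in \naturals$,
$$\Hcal_i := \{h \in \Hcal : d^\star(h, n) \le i\}.$$
Monotonicity $\Hcal_1 \subseteq \Hcal_2 \subseteq \cdots$ is immediate, and $\bigcup_{i=1}^{\infty} \Hcal_i = \Hcal$ since each $d^\star(h,n)$ is finite. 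It therefore remains only to show $\operatorname{NC}_n(\Hcal_i) < \infty$ for every $i$.

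Suppose toward a contradiction that $\operatorname{NC}_n(\Hcal_i) = \infty$ for some $i$. By Definition \ref{def:gem}, there exist distinct $x_1, \dots, x_d \in \Xcal$ with $d > i$ such that $\langle x_{1:d} \rangle_{\Hcal_i, n} \neq \bot$ and $C := \langle x_{1:d} \rangle_{\Hcal_i, n}$ is finite. Since $\langle x_{1:d}\rangle_{\Hcal_i, n} \neq \bot$, the set $\Hcal_i(x_{1:d}; n)$ is nonempty and $C = \bigcap_{h \in \Hcal_i(x_{1:d};n)} \supp(h)$. The key observation is that $x_{1:d}$ is a valid prefix of an $n$-noisy stream for \emph{every} $h \in \Hcal_i(x_{1:d}; n)$ simultaneously: by definition of $\Hcal_i(x_{1:d}; n)$, at most $n$ of the $x_j$ lie outside $\supp(h)$.

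The crux, and the main obstacle, is the simultaneous-extension argument that pins the deterministic generator's output to the finite set $C$. I would build the adversarial stream $x_1, \dots, x_d, x_{d+1}, \dots, x_D$, where $x_{d+1}, \dots, x_D$ enumerate the elements of $C$ not already among $x_{1:d}$, so that $\{x_{1:D}\} = \{x_{1:d}\} \cup C$. Each appended element lies in $\supp(h)$ for all $h \in \Hcal_i(x_{1:d}; n)$, hence adds no noise for any such $h$. For each individual $h \in \Hcal_i(x_{1:d}; n)$, extend $x_{1:D}$ to an infinite stream by appending distinct positive examples of $h$, which exist because $\supp(h)$ is infinite under the $\operatorname{UUS}$ property; this yields a legitimate $n$-noisy stream for $h$ with prefix $x_{1:D}$ whose distinct-example count equals $d^\star(h,n) \le i < d$ at round $t^\star = d^\star(h,n)$, since $x_1, \dots, x_d$ are distinct and presented one per round. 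The generator's guarantee then forces $\Gcal(x_{1:D}) \in \supp(h) \setminus \{x_{1:D}\}$. As $\Gcal$ is deterministic, the single value $\Gcal(x_{1:D})$ must lie in this set for \emph{every} $h \in \Hcal_i(x_{1:d}; n)$, so
$$\Gcal(x_{1:D}) \in \Bigl(\bigcap_{h \in \Hcal_i(x_{1:d}; n)} \supp(h)\Bigr) \setminus \{x_{1:D}\} = C \setminus \{x_{1:D}\}.$$
But $C \subseteq \{x_{1:D}\}$ by construction, so this set is empty, a contradiction.

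This mirrors the necessity direction of Theorem \ref{thm:noisyunifgen}; the difference is that here I only need the conclusion for the sublevel class $\Hcal_i$, whose members share the uniform distinct-example bound $i$, which is exactly what lets a single choice of $d > i$ defeat all consistent hypotheses at once. The routine points to verify are that appending elements of $C$ and then positive examples of a fixed $h$ never exceeds noise-level $n$, and that the distinct-example count genuinely attains the value $d^\star(h,n)$ (guaranteeing the hypothesis on $t^\star$ in Definition \ref{def:nnug} is met), both of which follow because $x_1, \dots, x_d$ are distinct and $C \subseteq \supp(h)$ for every relevant $h$.
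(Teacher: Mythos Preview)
Your proposal is correct and follows essentially the same approach as the paper: partition $\Hcal$ by the sample-complexity level sets $\Hcal_i = \{h : d^\star(h,n) \le i\}$, then show $\operatorname{NC}_n(\Hcal_i) < \infty$. The only difference is that the paper dispatches the last step in one line by observing that $\Gcal$ is a uniform generator for $\Hcal_i$ at noise-level $n$ with sample complexity $i$ and invoking the (quantitative) necessity direction of Theorem~\ref{thm:noisyunifgen} to conclude $\operatorname{NC}_n(\Hcal_i) < i$, whereas you unfold that necessity argument inline.
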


\begin{proof}
Let $\Xcal$ be countable and $\Hcal \subseteq \{0, 1\}^{\Xcal}$ be any non-uniformly noise-dependent generatable class satisfying the $\operatorname{UUS}$ property. Let $\Gcal$ be a non-uniform noise-dependent generator for $\Hcal$. Fix $n \in \naturals$. For every $h \in \Hcal$, let $d_{h, n} \in \naturals$ be the smallest natural number such that for any sequence $x_1, x_2, \dots$ with $\sum_{t=1}^{\infty} \mathbbm{1}\{x_t \notin \supp(h)\} \leq n,$ if there exists a $t\in\naturals$ such that $|\{x_1,\dots,x_t\}|=d_{h, n}$, then $\Gcal(x_{1:s})\in\operatorname{supp}(h)\setminus\{x_1,\dots,x_s\}$ for all $s\ge t$. Let $\Hcal_{i} =\{h\in\Hcal :d_{h, n} \leq i\}$ for all $i \in \naturals$.  Note that $\Hcal = \bigcup_{i \in \naturals} \Hcal_{i}$ because $d_{h, n} < \infty$ for all $h \in \Hcal.$  Moreover, we have that $\Hcal_i \subseteq \Hcal_{i+1}$ for all $i \in \naturals.$ Finally, for every $i \in \naturals$, observe that $\Gcal$ is a uniform generator for $\Hcal_i$ at noise-level $n \in \naturals$, implying that $\operatorname{NC}_n(\Hcal_i) < i < \infty.$
\end{proof}

Note the sufficiency and necessary conditions in Lemmas \ref{lem:suffnnug} and \ref{lem:necnnug} respectively are \emph{not} matching. Indeed, the condition in Lemma \ref{lem:suffnnug} is stronger than that in Lemma \ref{lem:necnnug}. We leave as an open question a complete characterization of non-uniform noise-dependent generatability. 

\subsection{Noisy Generatability in the Limit}

 Corollary \ref{cor:countnnug} showed that all countable classes are noisily generatable in the limit. Here, we provide  alternate sufficiency conditions for noisy generatability in the limit. Our first result shows that (noiseless) non-uniform generatability is sufficient for noisy generatability in the limit. Since all countable classes are (noiseless) non-uniform generatable \citep{li2024generation}, this result also shows that all countable classes are noisily generatable in the limit. 

 \begin{theorem}[Non-uniform Generatability $\implies$ Noisy Generatability in the Limit]
Let $\Xcal$ be countable and $\Hcal \subseteq \{0, 1\}^{\Xcal}$ be any class satisfying the \emph{UUS} property. If $\Hcal$ is (noiseless) non-uniformly generatable, then $\Hcal$ is noisily generatable in the limit. 
\end{theorem}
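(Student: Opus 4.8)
The plan is to treat the guaranteed noiseless non-uniform generator $\Gcal_{\mathrm{nu}}$ as a black box and run it on a \emph{cleaned} version of the incoming stream. The starting point is a counting observation about noisy enumerations. If $x_1, x_2, \dots$ is a noisy enumeration of $\supp(h)$ whose finitely many noisy symbols all occur among the first $T$ positions, then the suffix $x_{T+1}, x_{T+2}, \dots$ lies entirely in $\supp(h)$ and still enumerates all but at most $T$ elements of $\supp(h)$; since $\supp(h)$ is infinite by UUS, this suffix is a genuine noiseless stream with infinitely many distinct positive examples. Consequently, for \emph{every} offset $m \ge T$, feeding $\Gcal_{\mathrm{nu}}$ the suffix $\sigma^{(m)} := x_{m+1}, x_{m+2}, \dots$ is eventually correct: once $\sigma^{(m)}$ exhibits the non-uniform threshold $d_h$ many distinct examples, its outputs lie in $\supp(h)$ and, after discarding the finitely many already-seen symbols, yield new positive examples for all later rounds.

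The difficulty is that the generator does not know $T$ and so must decide which offset to trust. Crucially, the set of \emph{good} offsets $\{m : m \ge T\}$ is co-finite, and only the finitely many offsets $m < T$ are contaminated. I would therefore simulate all suffix-generators $G^{(m)}(x_{1:t}) := \Gcal_{\mathrm{nu}}(\sigma^{(m)}_{1:t-m})$ in parallel and have the master generator commit, at each round, to the output of the smallest offset that has not yet been invalidated, advancing the committed offset only when the current one is shown untenable. As a soundness filter I would couple this with the closure operator $\langle \cdot \rangle_{\Hcal, 0}$: an offset whose suffix is inconsistent with every hypothesis in $\Hcal$ (its closure equals $\bot$) can be discarded. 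Since the true $h$ is consistent with every good suffix, a good offset is never discarded on these grounds, so the committed offset advances only finitely often and stabilizes at some $m^{\star}$ beyond the last contaminated offset, after which the master simply tracks an eventually-correct $G^{(m^{\star})}$.

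The main obstacle is exactly this stabilization argument, and it is subtler than the closure-based constructions of \citet{li2024generation}. The honest issue is that positive data can never \emph{refute} an output: the generator cannot certify that $G^{(m)}$ has erred when $m$ is too small, because a contaminated suffix may still be consistent with some spurious hypothesis and may never force the closure to $\bot$. The crux is thus to design the invalidation rule so that every bad offset is eventually abandoned while no good offset is abandoned infinitely often, exploiting the finiteness of the noise (hence of the bad offsets) rather than any per-round correctness check, in the spirit of in-the-limit dovetailing for noisy-text identification. I would also stress why the alternative route---decomposing $\Hcal$ along its non-uniform filtration and generating from noisy closures at a growing noise level---fails here: the Noisy Closure dimension is scale-sensitive, so $\operatorname{NC}_n(\Hcal_i)$ may be infinite (as in Lemma~\ref{lem:ugnotnug}), and the intersection-of-supports set one would generate from can stay finite for arbitrarily long prefixes, forcing the generator to exhaust it repeatedly. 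Routing through a black-box noiseless generator run on a noise-free suffix is precisely what sidesteps this scale-sensitivity, since on a clean suffix only the ordinary noise-level-$0$ guarantee of non-uniform generation is ever invoked.
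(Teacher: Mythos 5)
Your high-level idea---run the noiseless non-uniform generator $\Gcal_{\mathrm{nu}}$ as a black box on a suffix of the stream lying beyond all the noise---is exactly the paper's starting point, but your proposal has a genuine gap at precisely the point you flag yourself: the commit-and-invalidate scheme over offsets has no working invalidation rule. As you admit, a contaminated suffix can remain consistent with some spurious $h' \in \Hcal$ forever, so its closure never becomes $\bot$, the master never advances past a bad offset $m < T$, and on that offset $\Gcal_{\mathrm{nu}}$ carries no guarantee whatsoever; generation can then fail for all time. Designing that rule is not a deferrable technicality---it is the heart of the theorem, and the proposal leaves it open. The paper sidesteps refutation entirely by making the offset move \emph{deterministically}: at round $t$ it sets $d_t = |\{x_1,\dots,x_t\}|$ and takes $r_t$ to be the largest index with $|\{x_{r_t},\dots,x_t\}| = \lfloor d_t/2 \rfloor$. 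Because a noisy enumeration must cover the infinite set $\supp(h)$, we have $d_t \to \infty$, hence $r_t \to \infty$, so the window clears the noise automatically without ever detecting it, while still retaining $\lfloor d_t/2 \rfloor \to \infty$ distinct clean examples, so it eventually exceeds the non-uniform threshold $d_{\Qcal}(h)$.

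There is a second gap: even on a good offset, your claim that the suffix generator's outputs, ``after discarding the finitely many already-seen symbols, yield new positive examples for all later rounds,'' is false as stated. The suffix generator only promises outputs in $\supp(h)$ minus the \emph{suffix} it has been fed; since its outputs at different rounds need not be distinct, nothing prevents it from emitting one and the same element of $\{x_1,\dots,x_{r_t-1}\} \cap \supp(h)$ at every round, so the master would repeat an already-seen example forever. The paper repairs this with a feedback loop: whenever the proposed output $\hat{z}$ lies in $\{x_1,\dots,x_t\}$, it is appended to the generator's input (legal, because $\hat{z} \in \supp(h)$ keeps the input stream clean and grows its distinct count) and the generator is queried again; the successive rejected outputs are distinct, so after at most $r_t - 1$ re-queries they exhaust $\{x_1,\dots,x_{r_t-1}\}$ and the next output is necessarily in $\supp(h)\setminus\{x_1,\dots,x_t\}$. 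Your proposal needs both of these mechanisms (or substitutes for them) before it becomes a proof; the closure-based filter it relies on can simply be dropped.
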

\begin{proof} 

Let $\Xcal$ be countable and $\Hcal \subseteq \{0, 1\}^{\Xcal}$ be any class satisfying the UUS property. Consider the generator $\Gcal,$ which uses a non-uniform generator $\Qcal$ as in Algorithm \ref{alg:gen}.

\begin{algorithm}[tb]
   \caption{Generator $\Gcal$}
   \label{alg:gen}
\begin{algorithmic}
   \STATE {\bfseries Input:} Hypothesis class $\Hcal$ and non-uniform generator $\Qcal$
   \FOR{$t=1, 2, \dots$}
   \STATE Adversary reveals example $x_t$
   \STATE Let $d_t = |\{x_1,\dots,x_t\}|$ and $r_t \leq t$ be the largest  time point such that $ |\{x_{r_t},\dots,x_{t}\}| = \lfloor {\frac{d_t}{2}} \rfloor$
   \STATE Initialize $\hat{z}^t_1 = \Qcal(x_{r_t:t})$

   \FOR{$i = 1, \dots, r_t - 1$}
   \IF{$\hat{z}^t_i \notin \{x_1, \dots, x_t\}$}
    \STATE $\Gcal$ outputs $\hat{z}^t_{i}$ and moves to next round.

   \ELSE
   \STATE  Update $\hat{z}^t_{i+1} = \Qcal(x_{r_t}, \dots, x_t, \hat{z}^t_1, \dots, \hat{z}^t_{i})$
   \ENDIF
   
   \ENDFOR

   \STATE $\Gcal$ outputs $\hat{z}^t_{r_t}$ and moves to next round. 
   
   \ENDFOR
\end{algorithmic}
\end{algorithm}

We will show that $\Gcal$ noisily generates from $\Hcal$ in the limit. Let $h^{\star} \in \Hcal$  and $x_1, x_2, \dots$ be the chosen hypothesis and noisy enumeration of $\supp(h^{\star})$ picked by the adversary, respectively. Let $d_{\Qcal}(h^{\star}) \in \mathbb{N}$  be the number of \emph{noiseless} distinct examples that $\Qcal$ needs for $h^{\star}$ to perfectly generate from $\supp(h^{\star})$. Let $t^{\star} \in \naturals$ be such that for all $t \geq t^{\star}$, we have that $x_t \in \operatorname{supp}(h^{\star}).$ Such a $t^{\star}$ must exist because there are at most a finite number of noisy examples in the stream. Also, because $x_1, x_2, \dots$ is a noisy enumeration, at some point $s^{\star} \in \naturals$, $r_{s^{\star}} \geq t^{\star}$ and $\lfloor \frac{d_{s^{\star}}}{2} \rfloor \geq d_{\Qcal}(h^{\star})$. Fix some $s \geq s^{\star}.$ We will prove that $\Gcal$ generates perfectly on round $s$. First, we claim that for all $i \in [r_s]$, we have that $\hat{z}^s_i \in \supp(h^{\star}) \setminus \{x_{r_s}, \dots, x_s, \hat{z}^s_{1}, \dots, \hat{z}^s_{i-1}\}$. Our proof will be by induction. For the base case, consider $i = 1$. We need to show that $\hat{z}^s_1 \in \operatorname{supp}(h^{\star}) \setminus \{x_{r_s}, \dots, x_s\}$. However, this just follows from the fact that $\hat{z}^s_1 = \Qcal(x_{r_s:s})$, $|\{x_{r_s}, \dots, x_s\}| \geq d_{\Qcal}(h^{\star})$, and $\{x_{r_s}, \dots, x_s\} \subseteq \supp(h^{\star}).$ Next, for the induction step, suppose that for all $i \leq m$, we have that $\hat{z}^s_i \in \supp(h^{\star}) \setminus \{x_{r_s}, \dots, x_s, \hat{z}^s_{1}, \dots, \hat{z}^s_{i - 1}\}$. Then, by definition, we have that $\hat{z}_{m+1}^s = \Qcal(x_{r_s}, \dots, x_s, \hat{z}^s_{1}, \dots, \hat{z}^s_{m})$. The proof of the claim is complete after noting that $\{x_{r_s}, \dots, x_s, \hat{z}^s_{1}, \dots, \hat{z}^s_{m}\} \subseteq \supp(h^{\star})$ and $|\{x_{r_s}, \dots, x_s, \hat{z}^s_{1}, \dots, \hat{z}^s_{m}\}| \geq d_{\Qcal}(h^{\star}).$

Now we will complete the overall proof by showing that the output of $\Gcal$ on round $s$ lies in $\supp(h^{\star}) \setminus \{x_1, \dots, x_s\}$. Suppose that $\Gcal$ outputs $\hat{z}_{j}^s$ for some $j \in [r_s]$. If $j < r_s$, then it must be the case that Line 7 fired and so by construction $\hat{z}_{j}^s \in \supp(h^{\star})\setminus\{x_1, \dots, x_s\}$. Thus, suppose that $j = r_{s}.$ If $j = r_s$, then it must be the case that  $\{\hat{z}_1^s, \dots, \hat{z}_{r_s - 1}^s\} = \{x_1, \dots, x_{r_s - 1}\}.$ Accordingly, we have that $\{x_{r_s}, \dots, x_s, \hat{z}^s_{1}, \dots, \hat{z}^s_{r_s - 1}\} = \{x_1, \dots, x_s\}$, which means that $\hat{z}^s_{r_s} \in \supp(h^{\star}) \setminus \{x_{r_s}, \dots, x_s, \hat{z}^s_{1}, \dots, \hat{z}^s_{r_s - 1}\}$ and therefore $\hat{z}^s_{r_s} \in  \supp(h^{\star}) \setminus \{x_1, \dots, x_s\}.$ Thus, in either case,  $\Gcal$ perfectly generates on round $s$. Since $s \geq s^{\star}$ is arbitrary, our proof is complete. 
\end{proof}

 In similar spirit to Theorem 3.10 from \citet{li2024generation}, we provide another sufficiency condition for noisy generatability in the limit in terms of \emph{uniform noise-independent generatability}. 

\begin{theorem} \label{thm:altsuffnoisygenlim} Let $\Xcal$ be countable and $\Hcal \subseteq \{0, 1\}^{\Xcal}$ satisfy the $\operatorname{UUS}$ property. If there exists a finite sequence of uniformly noise-independent generatable classes $\Hcal_1, \dots, \Hcal_k$ such that $\Hcal = \bigcup_{i=1}^k \Hcal_i$, then $\Hcal$ is noisily generatable in the limit.  
\end{theorem}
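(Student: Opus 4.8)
The plan is to build a single generator that runs $k$ fixed ``experts,'' one per class $\Hcal_i$, and to show that at least one expert is eventually safe to imitate. By Theorem~\ref{thm:ung}, each $\Hcal_i$ being uniformly noise-independent generatable is equivalent to $I_i := \bigcap_{h \in \Hcal_i} \supp(h)$ being infinite, and the associated generator simply outputs unseen elements of the fixed set $I_i$. Since $h^{\star} \in \Hcal = \bigcup_{i=1}^k \Hcal_i$, there is an index $i^{\star}$ with $h^{\star} \in \Hcal_{i^{\star}}$, and for that index $I_{i^{\star}} \subseteq \supp(h^{\star})$. If the generator knew $i^{\star}$ it would be done, so the entire problem reduces to identifying, in the limit, \emph{some} index whose set $I_i$ is a safe source of unseen positive examples.

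The key device I would use is that the enumeration structure lets us score the indices. Fix an enumeration $w^i_1, w^i_2, \dots$ of each (infinite) set $I_i$, and at round $t$ let $c_i(t)$ be the largest $m$ such that $w^i_1, \dots, w^i_m$ have all appeared among $x_1, \dots, x_t$. Each $c_i(t)$ is non-decreasing in $t$ (once seen, an element stays seen), and $c_i(t) \to \infty$ exactly when every element of $I_i$ appears somewhere in the stream. For any ``good'' index with $I_i \subseteq \supp(h^{\star})$ — in particular $i^{\star}$ — every element of $I_i$ is positive and hence appears, so $c_i(t) \to \infty$. The generator $\Gcal$ I would define uses, at each round, an index $i_t$ maximizing $c_i(t)$ (ties broken by smallest index) and outputs the first element of $I_{i_t}$ not yet seen.

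The main subtlety — and the step I expect to be the crux — is that $c_i(t) \to \infty$ does \emph{not} imply $I_i \subseteq \supp(h^{\star})$: a bad index could have a few out-of-support elements that happen to be inserted as noise, so that all of $I_i$ appears anyway. The resolution is that this case does not hurt us. If $c_i(t) \to \infty$, then every element of $I_i$ appears, so $I_i \setminus \supp(h^{\star})$ can only consist of noisy insertions and is therefore finite (since $\sum_t \indicator\{x_t \notin \supp(h^{\star})\} < \infty$); hence $I_i \cap \supp(h^{\star})$ is infinite. Consequently, after the finite time at which all of the finitely many elements of $I_i \setminus \supp(h^{\star})$ have been seen, the first unseen element of $I_i$ is forced to lie in $\supp(h^{\star})$, and such an element always exists. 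In short, \emph{every} index with $c_i(t) \to \infty$ is eventually safe to generate from, even if it is not a subset of $\supp(h^{\star})$.

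It then remains to argue that the argmax rule locks onto such an index. Let $A = \{i : c_i(t) \to \infty\}$ and $B = [k] \setminus A$; then $i^{\star} \in A$ so $A \neq \emptyset$, and for each $i \in B$ the non-decreasing sequence $c_i(t)$ is bounded, say by $M_i$, so $M := \max_{i \in B} M_i$ is finite (taking $M = 0$ if $B = \emptyset$). Since each $c_i$ with $i \in A$ increases to infinity and $A$ is finite, there is a finite time after which $c_i(t) > M \ge c_j(t)$ holds simultaneously for every $i \in A$ and $j \in B$, which forces $i_t \in A$ from then on. Combining this with the previous paragraph — taking the maximum of the finitely many times at which each $I_i \setminus \supp(h^{\star})$ for $i \in A$ becomes fully seen — yields a finite $t^{\star}$ beyond which $\Gcal$ always outputs an unseen element of $\supp(h^{\star})$, which is exactly noisy generation in the limit. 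The only remaining bookkeeping is to confirm the monotonicity of $c_i$ and that each $I_i$ is available to the generator as an intersection over the given finite decomposition, both of which are immediate in the information-theoretic setting.
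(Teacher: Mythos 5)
Your proposal is correct and takes essentially the same route as the paper's proof: the same prefix-counting scores (the paper's $p^i_t$ are your $c_i(t)$), the same argmax rule over the finitely many intersections $\bigcap_{h \in \Hcal_i} \supp(h)$ guaranteed infinite by Theorem~\ref{thm:ung}, and the same bounded-versus-unbounded score dichotomy to show the generator eventually locks onto a safe index. If anything, your handling of the key subtlety---that an index whose score tends to infinity need not have its intersection contained in $\supp(h^{\star})$, but its finitely many out-of-support elements are noisy insertions that are all seen after finite time---is spelled out more explicitly than in the paper, which asserts the corresponding containment at the end of its proof without elaborating on this point.
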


The proof of Theorem \ref{thm:altsuffnoisygenlim} is similar to the proof Theorem 3.10 from \citet{li2024generation} and so we defer it to Appendix \ref{app:altsuffnoisygenlim}.




    

    


\section{Discussion and Open Questions} \label{sec:disc}

In this paper, we introduced several notions of noisy generatability and made progress towards characterizing which classes are noisily generatable. We highlight some important directions of future work.

\noindent \textbf{Characterizations of Non-uniform Noise-dependent Generatability and  Noisy Generatability in the Limit.} An important direction of future work is to provide a complete and concise characterization of which classes are non-uniformly noise-dependent generatable and noisily generatable in the limit. Note that \citet{li2024generation} also pose the complete characterization of (noiseless) generatability in the limit as an open question, but do provide a complete characterization of (noiseless) non-uniform generatability. We conjecture that the correct characterization of non-uniform noise-dependent generatability will need to go beyond our sufficiency and necessity conditions in Section \ref{sec:nnug}.

\noindent \textbf{Noisy Generatability in the Limit via Membership Oracles.} In addition to showing that all countable classes are generatable in the limit, \citet{kleinberg2024language} provide a computable algorithm for doing so when given access to a \emph{membership oracle}. A membership oracle, $\mathcal{O}_{\Hcal}: \Hcal \times \Xcal \rightarrow \{0, 1\}$, takes as input a hypothesis $h \in \Hcal$ and an example $x \in \Xcal$ and outputs $\mathbbm{1}\{h(x) = 1\}$. Although we show that all countable classes are also noisily generatable in the limit, the focus of this paper was information-theoretic in nature. This motivates our next open question. Given access to a membership oracle, does there exists a \emph{computable} algorithm that can noisily generate in the limit from any countable class?

\section*{Acknowledgements}

The authors thank anonymous reviewer gxU2 for catching a bug in Theorem \ref{thm:ung} and providing a fix. VR acknowledges support from the NSF Graduate Research Fellowship Program (GRFP).  

\section*{Impact Statement}
This paper presents work whose goal is to advance the field of 
Machine Learning. There are many potential societal consequences
of our work, none of which we feel must be specifically highlighted here.

\nocite{langley00}

\bibliography{references}
\bibliographystyle{icml2025}

\newpage
\appendix
\onecolumn

\section{Definitions of Noiseless Generation} \label{app:gendef}

\begin{definition}[Uniform Generatability \citep{li2024generation}] \label{def:unifgen} Let $\Hcal \subseteq \{0, 1\}^{\Xcal}$  be any hypothesis class satisfying the $\operatorname{UUS}$ property. Then, $\Hcal$ is \emph{uniformly generatable}, if there exists a generator $\Gcal$ and $d^{\star} \in \mathbb{N}$, such that for every $h \in \Hcal$ and any sequence $x_1, x_2, \dots$ with $\{x_1, x_2, \dots\} \subseteq \operatorname{supp}(h)$, if there exists $t^{\star} \in \mathbb{N}$ such that $|\{x_1, \dots, x_{t^{\star}}\}| = d^{\star}$, then $\Gcal(x_{1:s}) \in  \operatorname{supp}(h) \setminus \{x_1, \dots, x_s\}$ for all $s \geq t^{\star}$.
\end{definition}

\begin{definition}[Non-uniform Generatability \citep{li2024generation}] \label{def:generability} Let $\Hcal \subseteq \{0, 1\}^{\Xcal}$ be any hypothesis class satisfying the $\operatorname{UUS}$ property. Then, $\Hcal$ is \emph{non-uniformly generatable} if there exists a generator $\Gcal$ such that for every $h \in \Hcal$, there exists a  $d^{\star} \in \mathbb{N}$ such that for any sequence $x_1, x_2, \dots$ with $\{x_1, x_2, \dots\} \subseteq \operatorname{supp}(h)$, if there exists $t^{\star} \in \mathbb{N}$ such that $|\{x_1, \dots, x_{t^{\star}}\}| = d^{\star}$, then $\Gcal(x_{1:s}) \in \operatorname{supp}(h) \setminus \{x_1, \dots, x_s\}$ for all $s \geq t^{\star}$.
\end{definition}

\begin{definition}[Generatability in the Limit \citep{li2024generation}] \label{def:geninlim} Let $\Hcal \subseteq \{0, 1\}^{\Xcal}$  be any hypothesis class satisfying the $\operatorname{UUS}$ property. Then, $\Hcal$ is \emph{generatable in the limit} if there exists a generator $\Gcal$ such that for every $h \in \Hcal$, and any \emph{enumeration} $x_1, x_2, \dots$ of $\operatorname{supp}(h)$, there exists a  $t^{\star} \in \mathbb{N}$ such  that $\Gcal(x_{1:s}) \in \operatorname{supp}(h) \setminus \{x_1, \dots, x_s\}$ for all $s \geq t^{\star}$.
\end{definition}

\section{Summary of Results for Noiseless Generation} \label{app:summgen}

In the noiseless setting, \citet{kleinberg2024language} initiated the study of generation in the limit by showing that all countable classes are generatable in the limit and that all finite classes are uniformly generatable. Following this result, \citet{li2024generation} provided a complete characterization of uniform generation in terms of a new combinatorial parameter termed the Closure dimension. 

\begin{definition}[Closure dimension \citep{li2024generation}]  The \emph{Closure dimension} of $\Hcal$, denoted $\operatorname{C}(\Hcal)$, is the largest natural number $d \in \mathbb{N}$ for which there exists \emph{distinct} $x_1, \dots, x_d \in \Xcal$ such that $\langle x_1, \dots, x_d \rangle_{\Hcal, 0} \neq \bot$ and $|\langle x_1, \dots, x_d \rangle_{\Hcal, 0}| < \infty.$  If this is true for arbitrarily large $d \in \mathbb{N}$, then we say that $\operatorname{C}(\Hcal) = \infty.$ On the other hand, if this is not true for $d = 1$, we say that $\operatorname{C}(\Hcal) = 0.$
\end{definition}

In particular, a class is uniformly generatable if and only its its Closure dimension is finite. 

\begin{theorem}[Theorem 3.3. in \citet{li2024generation}] \label{thm:unifgenchar} A class $\Hcal \subseteq \{0, 1\}^{\Xcal}$ is \emph{uniformly generatable} if and only if $\operatorname{C}(\Hcal) < \infty$.
\end{theorem}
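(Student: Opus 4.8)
The plan is to prove both directions directly from the definitions, using the monotonicity of the closure operator together with one structural observation about the Closure dimension: \emph{if $x_1,\dots,x_d$ are distinct with $\langle x_{1:d}\rangle_{\Hcal,0}\neq\bot$ and $d>\operatorname{C}(\Hcal)$, then $\langle x_{1:d}\rangle_{\Hcal,0}$ is infinite.} This is immediate from $\operatorname{C}(\Hcal)$ being the \emph{largest} $d$ admitting a distinct tuple with nonempty \emph{finite} closure: a finite nonempty closure on $d>\operatorname{C}(\Hcal)$ points would contradict maximality. I would also record two monotonicity facts, both following from $\Hcal(x_{1:d};0)=\{h:\{x_{1:d}\}\subseteq\supp(h)\}$ depending only on the \emph{set} $\{x_{1:d}\}$: adding points shrinks the consistent family and hence grows the closure $\langle\cdot\rangle_{\Hcal,0}$ (an intersection over that family), while deleting points does the reverse.

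\textbf{Sufficiency.} Assume $\operatorname{C}(\Hcal)=:c<\infty$ and set $d^\star=c+1$. Fix a bijection $\Xcal\leftrightarrow\naturals$. The generator I would build acts as follows on input $x_{1:s}$: compute the distinct set $\{x_{1:s}\}$; if it has fewer than $d^\star$ elements output an arbitrary default, and otherwise enumerate its distinct elements as $y_1,\dots,y_{d_s}$ and output the least-indexed element of $\langle y_{1:d_s}\rangle_{\Hcal,0}$ not already in $\{x_{1:s}\}$. I would verify this succeeds for every $h$ and every stream with $\{x_1,x_2,\dots\}\subseteq\supp(h)$ and all $s\ge t^\star$. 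At such $s$ the true $h$ contains all seen examples, so $h\in\Hcal(y_{1:d_s};0)$, giving $\langle y_{1:d_s}\rangle_{\Hcal,0}\neq\bot$ and $\langle y_{1:d_s}\rangle_{\Hcal,0}\subseteq\supp(h)$. Since $d_s\ge d^\star>c=\operatorname{C}(\Hcal)$, the structural observation forces $\langle y_{1:d_s}\rangle_{\Hcal,0}$ to be infinite; as only finitely many examples have been seen, a least unseen element exists and lies in $\supp(h)\setminus\{x_{1:s}\}$, exactly as required.

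\textbf{Necessity.} Assume $\operatorname{C}(\Hcal)=\infty$ and fix an arbitrary generator $\Gcal$ and any $d^\star\in\naturals$. By definition there is a distinct witnessing tuple of size at least $d^\star$; restricting to any $d^\star$ of its coordinates keeps the closure nonempty and finite (deleting points enlarges the consistent family, so the closure shrinks and stays finite), yielding distinct $x_1,\dots,x_{d^\star}$ with $C:=\langle x_{1:d^\star}\rangle_{\Hcal,0}$ nonempty and finite. Each $x_i$ lies in every $h\in\Hcal(x_{1:d^\star};0)$, so $\{x_{1:d^\star}\}\subseteq C$. The adversary feeds $x_1,\dots,x_{d^\star}$ followed by the remaining elements of $C$; since every such element lies in $\supp(h)$ for all consistent $h$, the consistent family, and therefore $C$, is unchanged, and the first time the distinct count reaches $d^\star$ defines $t^\star$. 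Once all of $C$ has been presented, at that round $s\ge t^\star$ the output $\hat x=\Gcal(x_{1:s})$ is either already in $\{x_{1:s}\}=C$ — an immediate failure for \emph{any} surviving $h$ — or lies outside $C=\bigcap_{h\in\Hcal(x_{1:s};0)}\supp(h)$, in which case some $h^\star\in\Hcal(x_{1:s};0)$ has $\hat x\notin\supp(h^\star)$. Committing to this $h^\star$ and completing the stream with further elements of $\supp(h^\star)$ (infinite by UUS) produces a valid stream for $h^\star$ on which $\Gcal$ fails at round $s\ge t^\star$; since $\Gcal$ and $d^\star$ were arbitrary, $\Hcal$ is not uniformly generatable.

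\textbf{Main obstacle.} I expect the delicate points to be bookkeeping rather than conceptual. I must apply the closure only to \emph{distinct} examples, since the $n=0$ consistency count $|\{x_{1:d}\}\cap\supp(h)|\ge d$ becomes vacuous once the stream repeats an element; and I must carefully justify the two monotonicity facts that let me (i) pass from an oversized witness to one of size exactly $d^\star$ and (ii) guarantee the generator keeps succeeding for all $s\ge t^\star$ as the seen set grows. The crux is the necessity direction's ``finiteness forces a mistake'' step: precisely because $C$ is finite the adversary can exhaust it, after which any genuinely new guess must leave the common intersection and is therefore refuted by some still-consistent hypothesis.
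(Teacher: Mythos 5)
Your proof is correct and follows essentially the same route as the paper: the statement itself is imported from \citet{li2024generation}, but the paper's own proof of its noisy generalization (Theorem \ref{thm:noisyunifgen}, Appendix \ref{app:noisyunifgen}) is exactly your argument at noise level $n=0$ --- sufficiency by outputting an unseen element of the closure, which must be infinite once the distinct-example count exceeds $\operatorname{C}(\Hcal)$, and necessity by enumerating a finite closure and then committing to a still-consistent hypothesis that refutes the generator's output. The only cosmetic difference is in the necessity direction: the paper keeps the oversized witness of size $d^\star \geq d$ and notes the distinct count passes through $d$ along the way, whereas you first shrink the witness to exactly $d^\star$ coordinates via the monotonicity of the closure; both are valid.
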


In addition, \citet{li2024generation} defined an intermediate setting termed non-uniform generatability, and provided a complete characterization of which classes are non-uniformly generatable. 

\begin{theorem}[Theorem 3.5 in \citet{li2024generation}] A class $\Hcal \subseteq \{0, 1\}^{\Xcal}$ is \emph{non-uniformly generatable} if and only if there exists a non-decreasing sequence of classes $\Hcal_1 \subseteq \Hcal_2 \subseteq \dots$ such that $\Hcal = \bigcup_{i=1}^{\infty} \Hcal_i$ and $\operatorname{C}(\Hcal_i) < \infty$ for every $i \in \naturals$.
\end{theorem}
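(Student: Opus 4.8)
The plan is to prove both directions of the biconditional by leaning on the characterization of uniform generatability (Theorem \ref{thm:unifgenchar}) as the bridge between the Closure dimension and the existence of uniform generators. The necessity direction is the easier one and mirrors the argument used for the noisy analog in Lemma \ref{lem:necnnug}; the sufficiency direction requires an explicit staged-generator construction.

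For necessity, I would suppose $\Hcal$ is non-uniformly generatable via a generator $\Gcal$. For each $h \in \Hcal$, let $d_h \in \naturals$ be the smallest number of distinct examples after which $\Gcal$ is guaranteed to generate perfectly from any positive stream of $h$; this is finite by the definition of non-uniform generatability. Define $\Hcal_i := \{h \in \Hcal : d_h \leq i\}$. Then $\Hcal_i \subseteq \Hcal_{i+1}$, and $\Hcal = \bigcup_{i} \Hcal_i$ since every $d_h$ is finite. The key observation is that $\Gcal$, with sample threshold $d^{\star} = i$, is a \emph{uniform} generator for $\Hcal_i$: for any $h \in \Hcal_i$ we have $d_h \leq i$, so once the distinct-example count reaches $i$ (having already passed $d_h$), $\Gcal$ generates correctly. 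Hence $\Hcal_i$ is uniformly generatable, and Theorem \ref{thm:unifgenchar} yields $\operatorname{C}(\Hcal_i) < \infty$ (in fact $\operatorname{C}(\Hcal_i) < i$).

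For sufficiency, I would assume such a decomposition $\Hcal = \bigcup_i \Hcal_i$ exists. By Theorem \ref{thm:unifgenchar} each $\Hcal_i$ admits a uniform generator $\Gcal_i$ whose threshold we take to be $d_i := \operatorname{C}(\Hcal_i) + 1$. I would build a single non-uniform generator $\Gcal$ that, at round $t$, counts the distinct examples $d_t := |\{x_{1:t}\}|$, selects the index $i_t := \max\{i \leq t : d_i \leq d_t\}$, and outputs $\Gcal_{i_t}(x_{1:t})$. Fixing the true hypothesis $h^{\star}$, let $i^{\star}$ be the least index with $h^{\star} \in \Hcal_{i^{\star}}$; by monotonicity $h^{\star} \in \Hcal_i$ for all $i \geq i^{\star}$. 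Setting the non-uniform threshold to $d^{\star} := \max(d_{i^{\star}}, i^{\star})$, I would argue that once the distinct count first reaches $d^{\star}$ at some round $t^{\star}$, then for every $s \geq t^{\star}$ we have both $s \geq t^{\star} \geq d^{\star} \geq i^{\star}$ and $d_s \geq d^{\star} \geq d_{i^{\star}}$, which forces $i_s \geq i^{\star}$ and $d_{i_s} \leq d_s$. Consequently $h^{\star} \in \Hcal_{i_s}$ and $\Gcal_{i_s}$ has seen enough distinct examples to invoke its uniform guarantee, so $\Gcal(x_{1:s}) = \Gcal_{i_s}(x_{1:s}) \in \supp(h^{\star}) \setminus \{x_{1:s}\}$.

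The main obstacle is the bookkeeping in the sufficiency construction: the index $i_t$ must drift to infinity so that it eventually overtakes the unknown $i^{\star}$, yet it must never outrun the supply of distinct examples, since invoking $\Gcal_{i_t}$ requires at least $d_{i_t}$ of them. The choice $i_t = \max\{i \leq t : d_i \leq d_t\}$ is precisely what balances these competing demands — the cap $i \leq t$ keeps $i_t$ well-defined even when the thresholds $d_i$ are bounded, while the constraint $d_i \leq d_t$ guarantees the selected uniform generator always has its required sample budget. Extra care is needed because a non-uniform stream need not enumerate all of $\supp(h^{\star})$; this is handled by stating the guarantee conditionally on the distinct count reaching $d^{\star}$ and by using $t^{\star} \geq d^{\star} \geq i^{\star}$ to ensure $i^{\star}$ has entered the admissible index range before we demand correct generation.
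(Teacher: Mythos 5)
Your proof is correct and follows essentially the same route as the paper: the paper itself only cites this theorem from \citet{li2024generation}, but its own proofs of the noisy analogues --- Lemma \ref{lem:necnnug} (necessity via the subclasses $\Hcal_i = \{h : d_h \le i\}$ and the observation that the non-uniform generator is uniform on each $\Hcal_i$) and Lemma \ref{lem:suffnnug} (sufficiency via a staged generator whose index $j_t = \max \{i \in [t] : \operatorname{NC}_i(\Hcal_i) < d_t\} \cup \{0\}$ matches your $i_t = \max\{i \le t : d_i \le d_t\}$) --- are exactly your two arguments with $\operatorname{NC}$ in place of $\operatorname{C}$ and Theorem \ref{thm:unifgenchar} as the bridge. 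The only nit is that your $i_t$ is undefined when the set $\{i \le t : d_i \le d_t\}$ is empty; the paper adjoins $\{0\}$ and has the generator play arbitrarily in that case, and you should do the same.
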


As a corollary, \citet{li2024generation} establish that all countable classes are actually non-uniformly generatable. \citet{charikar2024exploring} also established this result independently. With regards to generatability in the limit, \citet{kleinberg2024language} prove that all countable classes are generatable in the limit. \citet{li2024generation} provide an alternate sufficiency condition in terms of the Closure dimension. 

\begin{theorem}[Theorem 3.10 in \citet{li2024generation}] A class $\Hcal \subseteq \{0, 1\}^{\Xcal}$ is \emph{generatable in the limit} if there exists a  finite sequence of classes $\Hcal_1, \Hcal_2, \dots, \Hcal_n$ such that $\Hcal = \bigcup_{i=1}^{n} \Hcal_i$ and $\operatorname{C}(\Hcal_i) < \infty$ for all $i \in [n].$
\end{theorem}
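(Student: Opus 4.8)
The plan is to combine the $n$ uniform generators for $\Hcal_1, \dots, \Hcal_n$ into a single generator for the union and argue that, in the limit, it locks onto generating inside the true support. First, by Theorem \ref{thm:unifgenchar}, each hypothesis $\operatorname{C}(\Hcal_i) < \infty$ implies $\Hcal_i$ is uniformly generatable; I would use the concrete closure-based behaviour. Set $d_i := \operatorname{C}(\Hcal_i)$ and recall that once more than $d_i$ distinct examples $x_{1:t}$ have been seen that are consistent with some member of $\Hcal_i$ (i.e. $\Hcal_i(x_{1:t}; 0) \neq \emptyset$, so $\langle x_{1:t} \rangle_{\Hcal_i, 0} \neq \bot$), the definition of $\operatorname{C}(\Hcal_i) = d_i$ forces $|\langle x_{1:t} \rangle_{\Hcal_i, 0}| = \infty$, and every element of this closure lies in $\supp(h)$ for \emph{every} $h \in \Hcal_i$ consistent with $x_{1:t}$. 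I will call class $i$ \emph{active} at time $t$ once it is consistent and $|\{x_{1:t}\}| > d_i$; an active class always supplies a fresh candidate in $\langle x_{1:t} \rangle_{\Hcal_i, 0} \setminus \{x_{1:t}\}$, which exists since the closure is infinite.

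The key safety principle is as follows. Let $h^{\star}$ be the adversary's hypothesis and let $i^{\star}$ be an index with $h^{\star} \in \Hcal_{i^{\star}}$. Since the stream is a noiseless enumeration, $\{x_{1:t}\} \subseteq \supp(h^{\star})$ for all $t$, so $h^{\star} \in \Hcal_{i^{\star}}(x_{1:t}; 0)$ and hence $\langle x_{1:t} \rangle_{\Hcal_{i^{\star}}, 0} \subseteq \supp(h^{\star})$ for every $t$. Thus, once $\Hcal_{i^{\star}}$ is active, generating from its closure is always correct. The whole difficulty is cross-class arbitration: we do not know $i^{\star}$, and a class $\Hcal_j$ with $h^{\star} \notin \Hcal_j$ may remain consistent forever while its closure $\langle x_{1:t} \rangle_{\Hcal_j, 0}$ keeps containing elements outside $\supp(h^{\star})$, which, being absent from $\supp(h^{\star})$, never appear in the stream. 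Two structural facts I would use: consistency is monotone, since $\Hcal_i(x_{1:t+1}; 0) \subseteq \Hcal_i(x_{1:t}; 0)$, so across the $n$ classes the set of consistent indices is non-increasing and stabilises; and $\Hcal_{i^{\star}}$ never dies and is active after finitely many rounds.

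To perform the arbitration I would mimic the critical-index methodology of \citet{kleinberg2024language}: rather than betting on a single class, generate at each round from a consistent sub-collection selected so that, in the limit, the set we draw from is provably contained in $\supp(h^{\star})$ yet still infinite. The selection is driven purely by the \emph{content} of the stream (consistency and the containment relations among the finitely many evolving closures), never by timing, and outputs a fresh element of the closure of the currently selected collection. Because there are only finitely many classes and each $\operatorname{C}(\Hcal_i) < \infty$ keeps active closures infinite, one aims to show that every class that is not eventually ``safe'' either leaves the consistent set in finite time or is eventually superseded by the selection rule; combined with the persistence and eventual activity of $\Hcal_{i^{\star}}$, this would give that for all sufficiently large $t$ the selected closure is a subset of $\supp(h^{\star})$ containing an unseen element, hence perfect generation from some finite time on.

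The main obstacle is exactly this arbitration step. Unlike the noiseless \emph{uniform} setting, a wrong class need never become inconsistent (for instance, a class all of whose supports contain a fixed spurious point that the adversary simply never enumerates), so it cannot be ruled out by consistency alone; and because a noiseless stream only ever \emph{confirms} membership (every element of $\supp(h^{\star})$ eventually appears) and never \emph{refutes} it, no bounded-patience, timing-based test can separate a safe class from a persistent unsafe one without risking infinitely many false switches. The crux is therefore to design a content-based rule that provably charges each unsafe class with only finitely many outputs, leveraging the containment structure of the finitely many infinite closures together with the monotone elimination of inconsistent classes. I expect this Kleinberg--Mullainathan-style limiting argument, adapted to the scale of closures rather than individual languages, to be the technically delicate heart of the proof, while the remaining bookkeeping---infinitude of the selected closure, existence of a fresh output, and finite-time stabilisation---is routine.
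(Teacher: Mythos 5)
You have set the problem up correctly (uniform generators from $\operatorname{C}(\Hcal_i)<\infty$, safety of the true class's closure, monotone consistency), and you have correctly located the crux: a class not containing $h^\star$ can stay consistent forever while its closure keeps holding points outside $\supp(h^\star)$. But the proposal stops exactly there --- the selection rule is never defined, and the claim you say ``one aims to show'' (every unsafe class is eventually superseded) \emph{is} the theorem. Worse, the tools you restrict yourself to, consistency plus containment relations among the evolving closures in the style of the Kleinberg--Mullainathan critical index, provably cannot implement the arbitration. Take $\Xcal = \integers$, $\Hcal_1 = \{h^\star\}$ with $\supp(h^\star) = \naturals$, and $\Hcal_2 = \{h_k : k \in \naturals\}$ with $\supp(h_k) = \{1,\dots,k\} \cup \{-m : m \in \naturals\}$. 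Both classes satisfy UUS and have Closure dimension $0$, yet on the stream $x_t = t$ both remain consistent forever, and at every finite time $\langle x_{1:t}\rangle_{\Hcal_1,0} = \naturals$ and $\langle x_{1:t}\rangle_{\Hcal_2,0} = \{1,\dots,t\} \cup \{-m : m \in \naturals\}$ are \emph{incomparable} under inclusion, with intersection exactly the seen set. So containment yields no signal: under the KM rule with $\Hcal_2$ ordered first, $\Hcal_2$ is vacuously critical, $\Hcal_1$ never becomes critical, and the generator may emit negative integers (mistakes) forever. The nesting you are counting on ($\langle\cdot\rangle_{\Hcal_{i^\star},0}$ inside every persistent class's closure) holds only for the \emph{limiting} closures, never at finite times, which is precisely why the finite-time transplant of KM criticality fails.

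The missing idea --- the one used in this paper's own analogue, the proof of Theorem \ref{thm:altsuffnoisygenlim} in Appendix \ref{app:altsuffnoisygenlim}, which the paper states mirrors \citet{li2024generation}'s proof of the present theorem --- is a \emph{confirmation} statistic, not a containment test. Attach to each class a canonical enumeration of its candidate set: in Appendix \ref{app:altsuffnoisygenlim} this is the fixed common intersection $z^i_1, z^i_2, \dots$ supplied by Theorem \ref{thm:ung}; for the present statement it is $\langle x_{1:t}\rangle_{\Hcal_i,0}$ listed in the natural order of $\Xcal$, over consistent classes that have seen more than $\operatorname{C}(\Hcal_i)$ distinct examples (so the closure is infinite). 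Let $p^i_t$ be the length of the longest prefix of class $i$'s enumeration contained in $\{x_1,\dots,x_t\}$, and generate an unseen element from the class maximizing $p^i_t$. The two behaviours then separate: since closures only grow and every stream element lies in $\supp(h^\star)$, once a point $y \notin \supp(h^\star)$ enters $\langle x_{1:t}\rangle_{\Hcal_j,0}$ it never leaves and is never enumerated, so $p^j_t$ is bounded forever by the fixed position of $y$ in $\Xcal$; whereas $\langle x_{1:t}\rangle_{\Hcal_{i^\star},0}$ is sandwiched between $\{x_{1:t}\}$ and $\supp(h^\star)$, so its first $p$ elements are eventually the first $p$ elements of $\supp(h^\star)$, all of which appear, giving $p^{i^\star}_t \to \infty$. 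Hence the argmax eventually selects only classes whose unseen closure elements all lie in $\supp(h^\star)$, and perfect generation follows; in the example above, $p^2_t$ stays below the position of $-1$ while $p^1_t = t$. This scalar comparison is content-based but is not expressible through consistency and inclusion among closures, so the plan as you have stated it cannot be completed without it.
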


In this paper, we provide analogous results for noisy generation in terms of a different combinatorial parameter termed the Noisy Closure dimension. 

\section{An Alternate Version of Uniform Noise-independent Generatability} \label{app:altung}

In this section, we consider an alternate (weaker) version of uniform noise-independent generatability by measuring the sample complexity only in terms of the valid positive examples observed. 

\begin{definition}[Alternate Uniform Noise-Independent Generatability] \label{def:altung} Let $\Hcal \subseteq \{0, 1\}^{\Xcal}$  be any hypothesis class satisfying the $\operatorname{UUS}$ property. Then, $\Hcal$ is \emph{uniformly noise-independent generatable}, if there exists a generator $\Gcal$ and $d^{\star} \in \mathbb{N}$, such that for every $h \in \Hcal$ and any sequence $x_1, x_2, \dots$ with $\sum_{t=1}^{\infty} \mathbbm{1}\{x_t \notin \supp(h)\} < \infty$, if there exists $t^{\star} \in \mathbb{N}$ such that $|\{x_1, \dots, x_{t^{\star}}\} \cap \supp(h)| = d^{\star}$, then $\Gcal(x_{1:s}) \in  \supp(h) \setminus \{x_1, \dots, x_s\}$ for all $s \geq t^{\star}$.
\end{definition}

Note that the only difference between Definition \ref{def:altung} and \ref{def:ung} is how $d^{\star}$ is measured. In Definition \ref{def:altung}, $d^{\star}$ captures only the number of distinct positive examples in the stream, while $d^{\star}$ in Definition \ref{def:ung} measures the total number of distinct examples in the stream. We start off by proving a simple necessity condition for our alternate notion of uniform noise-independent generatability. 

\begin{lemma}[Necessary condition for Alternate Uniform Noise-independent Generatability] \label{lem:altung} Let $\Xcal$ be countable and $\Hcal \subseteq \{0, 1\}^{\Xcal}$ satisfy the $\operatorname{UUS}$ property. If there exists a subclass $\Fcal \subseteq \Hcal$ and a hypothesis $f \in \Fcal$ such that $\left|\bigcap_{h \in \Fcal} \supp(h) \right| < \infty$ and $\left|\bigcap_{h \in \Fcal \setminus \{f\}} \supp(h)\right| = \infty$, then $\Hcal$ is \emph{not} uniformly noise-independent generatable according to Definition \ref{def:altung}.
\end{lemma}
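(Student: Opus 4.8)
The plan is to adapt the \emph{commit-and-refute} technique from the necessity proof of Theorem~\ref{thm:ung}: feed the generator a carefully chosen finite prefix, read off its guess $\hat{x}$, and then retroactively declare a true hypothesis that both excludes $\hat{x}$ and is consistent with the prefix. I would first record the relevant structure. Write $A := \bigcap_{h \in \Fcal} \supp(h)$ and $B := \bigcap_{h \in \Fcal \setminus \{f\}} \supp(h)$, so by hypothesis $m := |A| < \infty$ and $|B| = \infty$. The key facts are that $A = B \cap \supp(f)$, and hence that both $B \setminus \supp(f) = B \setminus A$ and $\supp(f) \setminus A$ are infinite (the former since $B$ is infinite but $A$ finite, the latter by the $\operatorname{UUS}$ property). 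The asymmetry I exploit is that every element of $B \setminus \supp(f)$ is positive for all $g \in \Fcal \setminus \{f\}$ yet noisy for $f$, while every element of $\supp(f) \setminus A$ is positive for $f$ yet noisy for at least one $g$.

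Next I fix an arbitrary generator $\Gcal$ and arbitrary $d^{\star} \in \naturals$ and construct a bad instance. I would build a finite prefix $x_{1:N}$ as follows: if $d^{\star} \le m$, let it enumerate $A$; otherwise let it consist of all of $A$ together with $d^{\star}-m$ distinct points $p_1, \dots, p_{d^{\star}-m} \in \supp(f) \setminus A$ and $d^{\star}-m$ distinct points $q_1, \dots, q_{d^{\star}-m} \in B \setminus \supp(f)$, all chosen pairwise distinct (possible by the infinitude above). The crucial claim is that \emph{every} $h \in \Fcal$ is simultaneously triggered by this prefix with only finitely many noisy examples: $f$ sees the $d^{\star}$ distinct positives $A \cup \{p_i\}$ (with the $q_i$ as its only noise), while each $g \in \Fcal \setminus \{f\}$ sees at least the $d^{\star}$ distinct positives $A \cup \{q_i\}$ (with only the $p_i$ possibly noisy). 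Thus the distinct-positive count of each $h \in \Fcal$ reaches at least $d^{\star}$ and so passes through $d^{\star}$ at some time $t^{\star}_h \le N$, and crucially $A \subseteq \{x_{1:N}\}$.

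With the prefix in hand, set $\hat{x} = \Gcal(x_{1:N})$. If $\hat{x} \in \{x_{1:N}\}$, then $\hat{x}$ is already seen and $\Gcal$ fails for the triggered hypothesis $f$. Otherwise $\hat{x} \notin \{x_{1:N}\} \supseteq A$, and since $A = \bigcap_{h \in \Fcal}\supp(h)$ there exists $h_0 \in \Fcal$ with $\hat{x} \notin \supp(h_0)$. I then take $h_0$ as the true hypothesis and complete the stream by appending distinct positive examples of $h_0$ forever (available by $\operatorname{UUS}$), which keeps the total noise finite. Because $h_0$ is triggered at some $t^{\star}_{h_0} \le N$ and $\Gcal(x_{1:N}) = \hat{x} \notin \supp(h_0) \setminus \{x_{1:N}\}$ at the round $s = N \ge t^{\star}_{h_0}$, the generator fails on this finitely-noisy instance. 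Since $(\Gcal, d^{\star})$ were arbitrary, $\Hcal$ is not uniformly noise-independent generatable under Definition~\ref{def:altung}.

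The main obstacle, and where the argument genuinely departs from Theorem~\ref{thm:ung}, is the regime $d^{\star} > m$ together with a possibly infinite $\Fcal$: there one cannot trigger the obligation simply by exhausting the finite common intersection $A$. The fix is the balanced padding above, which pushes positives of $f$ and positives of the $\Fcal\setminus\{f\}$ block in equal measure so that \emph{all} hypotheses in $\Fcal$ cross the threshold $d^{\star}$ at once while their common support $A$ remains entirely inside the prefix; this is precisely what guarantees that any fresh guess $\hat{x}$ is refutable by a still-triggered hypothesis. The remaining work is the bookkeeping verifying that the noise counts ($d^{\star}-m$ for $f$, at most $d^{\star}-m$ for each $g$) are finite and that all padding points can be chosen distinct, both of which follow from the infinitude of $B \setminus \supp(f)$ and $\supp(f) \setminus A$.
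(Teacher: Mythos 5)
Your proposal is correct and follows essentially the same route as the paper's proof: build a prefix containing all of $A = \bigcap_{h\in\Fcal}\supp(h)$ plus padding points from $\supp(f)\setminus A$ and from $\bigcap_{h\in\Fcal\setminus\{f\}}\supp(h)$, so that every $h\in\Fcal$ crosses the positive-example threshold while $A$ stays inside the prefix, then read off the generator's guess and refute it with a hypothesis in $\Fcal$ whose support excludes it. The only differences are cosmetic (your case split on $d^{\star}\le m$, and choosing the $q_i$ from $B\setminus\supp(f)$ rather than from $B$, which if anything handles distinctness slightly more carefully than the paper).
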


Like Theorem \ref{thm:ung}, Lemma \ref{lem:altung} is a hardness result -- it shows that finite classes with just two hypotheses may not be uniformly noise-independent generatable even when the sample complexity is measured in terms of positive example. 

\begin{proof} (of Lemma \ref{lem:altung}) Let $\Xcal$ be countable and $\Hcal \subseteq \{0, 1\}^{\Xcal}$ satisfy the $\operatorname{UUS}$ property. Let $\Fcal \subseteq \Hcal$ be any subset of $\Hcal$ such that $\Bigl| \bigcap_{h \in \Fcal} \supp(h) \Bigl| < \infty$ and for which there exists an $f \in \mathcal{F}$ such that  $\Bigl| \bigcap_{h \in \Fcal \setminus \{f\}} \supp(h) \Bigl| = \infty.$ We need to show that $\Hcal$ is not uniformly noise-independent generatable. 
We will show that $\Fcal$ is not uniformly noise-independent generatable which will imply that $\Hcal$ is not uniformly noise-independent generatable. To that end, we need to show that for every $\Gcal$ and any $d \in \naturals$, there exists a $h \in \Fcal$ and a sequence $x_1, x_2, \dots$ with $\sum_{t=1}^{\infty} \mathbbm{1}\{x_t \notin \supp(h)\} < \infty,$ such that for every $t \in \naturals$ where $|\{x_1, \dots, x_t\} \cap \supp(h)| = d$, there exists an $s \geq t$ such that $\Gcal(x_{1:s}) \notin \supp(h) \setminus \{x_1, \dots, x_s\}$. To that end, fix a generator $\Gcal$ and a number $d \in \mathbb{N}$. Let $x_1, \dots, x_d$ be a sequence of $d$ unique examples in $\bigcap_{h \in \Fcal \setminus \{f\}} \supp(h)$. Let $z_1, \dots, z_d$ be any $d$ unique examples in $\operatorname{supp}(f) \setminus \bigcap_{h \in \Fcal} \supp(h)$ and consider the sequence of $2d$ unique examples obtained by concatenating $z_1, \dots, z_d$ to the end of $x_1, \dots, x_d$. Denote this new sequence by $x_{1:2d}.$ Let $q = | \bigcap_{h \in \Fcal} \supp(h)|$ and let $v_1, \dots, v_{q}$ be its elements sorted in its natural order. Concatenate $v_1, \dots, v_q$ to the end of $x_{1:2d}$ and denote this new sequence of unique examples as $x_{1:2d+q}.$ Observe that for every $h \in \Fcal$, we have that $|\supp(h) \cap \{x_1, \dots x_{2d+q}\}| \geq d.$ Let $\hat{x}_{2d + q} = \Gcal(x_1, \dots, x_{2d+q})$ and suppose without loss of generality that $\hat{x}_{2d + q} \notin \{x_1, \dots, x_{2d + q}\}.$ Let $h^{\star} \in \Fcal$ be a be hypothesis such that $\hat{x}_{2d + q} \notin \operatorname{supp}(h^{\star}).$ Such a hypothesis must exist because $\hat{x}_{2d + q} \notin \{x_1, \dots, x_{2d + q}\}$ and $\bigcap_{h \in \Fcal} \supp(h) \subseteq \{x_1, \dots, x_{2d + q}\}.$ Let $x_{2d+q+1}, x_{2d+q+2}, \dots$ be any completion of the stream such that $\{x_{2d+q+t}\} _{t=1}^{\infty} \subseteq \operatorname{supp}(h^{\star})$ and $\{x_{2d+q+t}\} _{t=1}^{\infty} \cap \{x_t\}_{t=1}^{2d + q} = \emptyset.$

We are now ready to complete the proof. Let $h^{\star}$ and $x_1, x_2, \dots$ be the hypothesis and sequence chosen above. Then, by definition, observe that 
$\sum_{t=1}^{\infty} \mathbbm{1}\{x_t \notin \operatorname{supp}(h^{\star})\} = \sum_{t=1}^{2d + q} \mathbbm{1}\{x_t \notin \operatorname{supp}(h^{\star})\}  \leq d < \infty.$
\noindent Moreover, we also know that  $|\operatorname{supp}(h^{\star}) \cap \{x_1, \dots, x_{2d+q}\}| \geq d$ and the first $2d$ examples are distinct, implying that there exists exactly one time point $t \leq 2d + q$ such that $|\operatorname{supp}(h^{\star}) \cap \{x_1, \dots x_{2d+q}\}| = d.$ Finally, noting that $2d + q \geq t$ and that $\Gcal(x_1, \dots, x_{2d+q}) \notin \operatorname{supp}(h^{\star}) \setminus \{x_1, \dots, x_{2d + 1}\}$ completes the proof that $\Fcal$ is not uniformly noise-independent generatable since $\Gcal$ and $d$ were chosen arbitrarily.
\end{proof}

Finally, Theorem \ref{thm:altung} provides a full characterization of uniform noise-independent generatability (Definition \ref{def:ung} in terms of the noisy closure dimension. 

\begin{theorem}[Characterization of Uniform Noise-independent Generatability] \label{thm:altung} Let $\Xcal$ be countable and $\Hcal \subseteq \{0, 1\}^{\Xcal}$ satisfy the $\operatorname{UUS}$ property. Then, $\Hcal$ is uniformly noise-independent generatable if and only if $\sup_n (\operatorname{NC}_{n \in \mathbb{N}}(\Hcal) - n) < \infty$.
\end{theorem}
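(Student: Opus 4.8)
The plan is to prove both directions directly in terms of the Noisy Closure dimension, for the notion of uniform noise-independent generatability of Definition \ref{def:altung} (where sample complexity is counted in distinct \emph{positive} examples). Set $\kappa := \sup_{n \ge 0}(\operatorname{NC}_n(\Hcal) - n)$, with the convention $\operatorname{NC}_0(\Hcal) = \operatorname{C}(\Hcal)$. I first record the monotonicity $\operatorname{NC}_n(\Hcal) \le \operatorname{NC}_{n+1}(\Hcal)$ for all $n \ge 0$: any distinct $x_{1:D}$ witnessing $\operatorname{NC}_n$ (i.e. $\langle x_{1:D}\rangle_{\Hcal, n}$ finite and $\ne \bot$) still witnesses $\operatorname{NC}_{n+1}$, since $\Hcal(x_{1:D}; n) \subseteq \Hcal(x_{1:D}; n+1)$ forces $\langle x_{1:D}\rangle_{\Hcal, n+1} \subseteq \langle x_{1:D}\rangle_{\Hcal, n}$ while keeping the index set nonempty. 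Hence $\operatorname{C}(\Hcal) = \operatorname{NC}_0 \le \operatorname{NC}_1 \le \sup_{n \ge 1}(\operatorname{NC}_n - n) + 1$, so $\kappa < \infty$ exactly when the theorem's condition $\sup_{n \ge 1}(\operatorname{NC}_n(\Hcal) - n) < \infty$ holds; it thus suffices to show $\Hcal$ is generatable if and only if $\kappa < \infty$.

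For sufficiency, assume $\kappa < \infty$ and set $d^\star = \kappa + 1$. Given a stream whose distinct elements form a set of size $m := |\{x_{1:t}\}|$, the generator computes $\langle x_{1:t}\rangle_{\Hcal, n}$ at the self-tuned noise level $n := m - d^\star$ (and outputs a fixed default if $m < d^\star$), returning a fixed unseen element of it. Fix the true $h$, let $e$ be the number of distinct negatives seen, and suppose at least $d^\star$ distinct positives have appeared, so the distinct-positive count $m - e \ge d^\star$, i.e. $e \le m - d^\star = n$. Since $h$ misses only the $e \le n$ distinct negatives, $h \in \Hcal(x_{1:t}; n)$, so the index set is nonempty and $\langle x_{1:t}\rangle_{\Hcal, n} \subseteq \supp(h)$ — this is the crucial point that \emph{over}-estimating the noise level is safe, as it merely retains the true hypothesis. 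Moreover $m = n + d^\star = n + \kappa + 1 > n + \kappa \ge \operatorname{NC}_n(\Hcal)$, so by Definition \ref{def:gem} the nonempty closure cannot be finite and is therefore infinite; an unseen element exists and lies in $\supp(h) \setminus \{x_{1:t}\}$. Since the distinct-positive count only grows, this holds on every later round, establishing generatability per Definition \ref{def:altung}.

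For necessity I prove the contrapositive: if $\kappa = \infty$, then no $\Gcal$ with any threshold $d^\star$ succeeds. Choose $n$ with $\operatorname{NC}_n(\Hcal) \ge n + d^\star$ and distinct witnesses $x_1, \dots, x_D$ (with $D = \operatorname{NC}_n(\Hcal)$, or any large finite $D$ if $\operatorname{NC}_n = \infty$) such that $\langle x_{1:D}\rangle_{\Hcal, n} =: \{c_1, \dots, c_L\}$ is finite and nonempty. Feed $\Gcal$ the prefix $x_1, \dots, x_D, c_1, \dots, c_L$ and let $\hat{x}$ be its output. If $\hat{x}$ is already seen, $\Gcal$ fails the ``new'' requirement; otherwise $\hat{x} \notin \{c_1, \dots, c_L\} = \bigcap_{h' \in \Hcal(x_{1:D}; n)} \supp(h')$, so some $h^\star \in \Hcal(x_{1:D}; n)$ has $\hat{x} \notin \supp(h^\star)$. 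Declare $h^\star$ true and complete the stream with fresh positives of $h^\star$ (possible by UUS). The stream then carries at most $n < \infty$ negatives (the $\le n$ points of $x_{1:D}$ outside $\supp(h^\star)$, while every $c_j \in \supp(h^\star)$), and at least $D - n \ge d^\star$ distinct positives of $h^\star$ have appeared; as this count climbs to $\ge d^\star$ in unit steps it equals $d^\star$ at some $t^\star$, yet $\Gcal$ errs on the round after the prefix, which is $\ge t^\star$ — contradicting Definition \ref{def:altung}.

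The main obstacle is the sufficiency construction: because the generator never learns the true noise level $n^\star$, it cannot simply invoke $\langle \cdot \rangle_{\Hcal, n^\star}$. The resolution is to recognize the two-sided constraint on the working level $n$ — it must be \emph{at least} the number of distinct negatives (so the closure stays inside $\supp(h)$) yet \emph{small enough} that $m > \operatorname{NC}_n(\Hcal)$ (so the closure is infinite) — and to observe that the single choice $n = m - d^\star$ with $d^\star = \kappa + 1$ satisfies both simultaneously precisely because $\operatorname{NC}_n(\Hcal) - n \le \kappa$. This is exactly why the governing quantity is $\sup_n(\operatorname{NC}_n(\Hcal) - n)$, rather than the per-noise-level finiteness used for uniform noise-\emph{dependent} generation. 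A minor point, dispatched by the monotonicity remark folding $\operatorname{C}(\Hcal) = \operatorname{NC}_0(\Hcal)$ into the supremum, is the noiseless edge case $n = 0$.
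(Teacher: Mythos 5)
Your proof is correct and takes essentially the same approach as the paper's own (sketched) argument: the identical self-tuned closure generator that plays from $\langle x_{1:t} \rangle_{\Hcal,\, m - d^\star}$ for sufficiency, and the identical pad-with-closure-elements adversary for necessity. You additionally fill in details the paper's sketch leaves implicit (monotonicity of $\operatorname{NC}_n$ in $n$, the $n = 0$ edge case, and the exact-equality trigger time $t^\star$), which only strengthens the argument.
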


As the proof follows similar techniques as those in the main text, we only provide the proof sketch here. 

\begin{proof}(sketch of Theorem \ref{thm:altung}) For the necessity direction, suppose that $\sup_n (\operatorname{NC}_n(\Hcal) - n) = \infty$. Then for every $d \in \mathbb{N}$, we can find a $t \geq d$ and a sequence $x_1, \dots, x_t$, such that $|\langle x_1, ..., x_t \rangle_{\Hcal, t-d}| < \infty.$ Hence, by padding $x_1, \dots, x_t$ with any remaining elements in $\langle x_1, ..., x_t \rangle_{\Hcal, t-d}$, we can force the Generator to make a mistake while ensuring that the hypothesis chosen is consistent with at least $d$ examples in the stream. For the sufficiency direction, if $\sup_n (\operatorname{NC}_n(\Hcal) - n) < \infty$, then there exists a $d \in \mathbb{N}$ such that for every $t \geq d$ and distinct $x_1, ..., x_t$, we have that either $\langle x_1, ..., x_t \rangle_{\Hcal, t-d} = \bot$ or $|\langle x_1, ..., x_t \rangle_{\Hcal, t-d}| = \infty.$ Thus, the algorithm which for $t \geq d$ plays from $\langle x_1, ..., x_t \rangle_{\Hcal, t-d}\setminus \lbrace{x_1, \dots, x_{t}\rbrace}$ if $\langle x_1, ..., x_t \rangle_{\Hcal, t-d} \neq \bot$ is guaranteed to succeed.
\end{proof}



\section{The Hardness of Non-uniform Noise-independent Generatability} \label{app:nung}

As alluded to in Remark \ref{rem:fifth}, there is a fifth setting of noisy generation, which we did not define in the main text. In this setting, the ``sample complexity" of the generator can be non-uniform over the class $\Hcal$ but must still be uniform over the noise-level and stream picked by the adversary. We define this formally below. 

\begin{definition}[Non-uniform Noise-independent Generatability] \label{def:ung} Let $\Hcal \subseteq \{0, 1\}^{\Xcal}$  be any hypothesis class satisfying the $\operatorname{UUS}$ property. Then, $\Hcal$ is \emph{non-uniformly noise-independent generatable}, if there exists a generator $\Gcal$ such that for every $h \in \Hcal$, there exists $d^{\star} \in \mathbbm{N}$ such that for any sequence $x_1, x_2, \dots$ with
$$\sum_{t=1}^{\infty} \mathbbm{1}\{x_t \notin \supp(h)\} < \infty,$$
\noindent if there exists $t^{\star} \in \mathbbm{N}$ such that $|\{x_1, \dots, x_{t^{\star}}\} \cap \supp(h)| = d^{\star}$, then $\Gcal(x_{1:s}) \in  \supp(h) \setminus \{x_1, \dots, x_s\}$ for all $s \geq t^{\star}$.
\end{definition}

Unfortunately, non-uniform noise-independent generatability is still a restrictive property as not all finite classes are non-uniformly noise-independent generatable. This is in contrast to uniform noise-dependent generatability as shown by Corollary \ref{cor:finnug}.

\begin{lemma}[Not All Finite Classes are Non-uniformly Noise-independent Generatable] Let $\Xcal$ be countable. There exists a finite class $\Hcal \subseteq \{0, 1\}^{\Xcal}$ which satisfies the \emph{UUS} property that is not non-uniformly noise-independent generatable. 
\end{lemma}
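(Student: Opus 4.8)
The plan is to construct a concrete finite class witnessing the impossibility, and the natural candidate is precisely the two-hypothesis construction already used to prove the earlier hardness results. I would leverage Lemma \ref{lem:altung}: it states that if there is a subclass $\Fcal \subseteq \Hcal$ and some $f \in \Fcal$ with $|\bigcap_{h \in \Fcal} \supp(h)| < \infty$ but $|\bigcap_{h \in \Fcal \setminus \{f\}} \supp(h)| = \infty$, then $\Hcal$ fails to be uniformly noise-independent generatable (in the positive-example-counting sense of Definition \ref{def:altung}). The key observation is that Definition \ref{def:ung} of non-uniform noise-independent generatability measures sample complexity in terms of distinct \emph{positive} examples, exactly as Definition \ref{def:altung} does; the only relaxation is that $d^{\star}$ may now depend on $h$. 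So I would first check whether the hardness argument behind Lemma \ref{lem:altung} is robust to letting $d^{\star}$ vary with $h$, and adapt it accordingly.

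First I would exhibit a finite class satisfying the hypothesis of Lemma \ref{lem:altung}. A clean choice is $\Xcal = \naturals$ with two hypotheses $f, g$ where $\supp(g) = \naturals$ (all of $\Xcal$) and $\supp(f) = \naturals \setminus \{1\}$, say, chosen so that $\supp(f) \cap \supp(g) = \supp(f)$ is infinite — but this does \emph{not} satisfy the required finiteness of the full intersection. Instead I would arrange $|\supp(f) \cap \supp(g)| < \infty$ while $|\supp(g)| = \infty$; concretely take $\supp(f) = \{2k : k \in \naturals\}$ and $\supp(g) = \{2k : k \le K\} \cup \{2k+1 : k \in \naturals\}$ so that $\supp(f) \cap \supp(g)$ is finite but both supports are infinite (satisfying UUS). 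Then with $\Fcal = \{f, g\}$ and the distinguished hypothesis being $g$, we get $|\bigcap_{h\in\Fcal}\supp(h)| < \infty$ and $|\bigcap_{h \in \Fcal \setminus \{g\}} \supp(h)| = |\supp(f)| = \infty$, fulfilling the lemma's premise.

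Second, I would carry out the adversary argument directly for the non-uniform definition rather than invoking Lemma \ref{lem:altung} as a black box, since that lemma is stated for the uniform notion. The idea: fix any generator $\Gcal$ and the two values $d^{\star}_f, d^{\star}_g$ it promises for $f$ and $g$. Set $d = \max(d^{\star}_f, d^{\star}_g)$ and replay the construction from the proof of Lemma \ref{lem:altung} with this $d$: feed $d$ distinct examples from the infinite common set $\bigcap_{h \in \Fcal \setminus \{g\}}\supp(h) = \supp(f)$, then $d$ distinct examples from $\supp(g)$ outside the finite intersection, then enumerate the finite intersection, forcing $\Gcal$ to output some $\hat{x}$ that lies outside $\supp(h^{\star})$ for one of the two hypotheses $h^{\star}$; crucially, whichever $h^{\star}$ is forced, the stream has been consistent with $h^{\star}$ on at least $d \ge d^{\star}_{h^{\star}}$ positive examples, so the relevant threshold has been crossed and the mistake is fatal. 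Completing the tail of the stream inside $\supp(h^{\star})$ keeps the noise finite.

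The main obstacle I anticipate is the bookkeeping around the two different thresholds $d^{\star}_f$ and $d^{\star}_g$: because $d^{\star}$ is now allowed to depend on $h$, I must ensure the single forced stream simultaneously crosses the positive-example threshold for \emph{whichever} hypothesis the adversary ultimately commits to, which is why taking $d = \max(d^{\star}_f, d^{\star}_g)$ and guaranteeing each hypothesis is consistent with at least $d$ of the presented positive examples is essential. A secondary subtlety is verifying that the forced $\hat{x}$ really violates $\supp(h^{\star}) \setminus \{x_{1:s}\}$ at the precise round where the positive-count first equals the threshold, which requires that the mistake-inducing example be produced \emph{after} the threshold is reached; arranging the stream so the generator is queried at a round with positive-count $\ge d$ handles this. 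Once these are in place, the contradiction is immediate and the proof concludes that no such $\Gcal$ can exist, so the finite class is not non-uniformly noise-independent generatable.
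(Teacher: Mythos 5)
Your proposal is correct, but it takes a genuinely different route from the paper's. The paper's witness class is the disjoint pair $\Hcal = \{h_e, h_o\}$ (evens vs.\ odds, with empty intersection), and its argument fixes the canonical stream $1, 2, 3, \dots$, inspects the set $S = \{\Gcal(1, \dots, i)\}_{i \in \naturals}$ of generator outputs along it, and case-splits on whether $S$ contains finitely many evens, finitely many odds, or infinitely many of each; in every case this produces a single hypothesis, depending only on $\Gcal$, that defeats every threshold $d$, i.e.\ the paper proves the negation of the definition in its exact quantifier order ($\forall \Gcal\ \exists h\ \forall d^{\star}\ \exists$ stream). You instead use a class with finite but nonempty common support and recycle the forcing construction behind Lemma \ref{lem:altung} with $d = \max(d^{\star}_f, d^{\star}_g)$, arguing by contradiction; your defeated hypothesis $h^{\star}$ depends on the generator's output at the forcing round, hence on the thresholds, so you prove the formally weaker statement $\forall \Gcal\ \forall (d^{\star}_f, d^{\star}_g)\ \exists h^{\star}\ \exists$ stream. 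This is nonetheless logically sufficient: assuming a generator exists pins down finitely many thresholds (finiteness of $\Hcal$ is exactly what licenses the max trick), and your stream then breaks the guarantee for whichever hypothesis $\hat{x}$ misses, since both positive counts have passed $d \geq d^{\star}_{h^{\star}}$ and, growing in unit steps, each count equals $d^{\star}_{h^{\star}}$ at some $t^{\star} \leq s$. Your approach buys modularity and economy — it is essentially the proof of Lemma \ref{lem:altung} plus a max-of-thresholds reduction — while the paper's approach buys the stronger conclusion that one hypothesis, chosen independently of the thresholds, fools the generator for all $d$ simultaneously, matching the literal negation of the definition rather than refuting it by contradiction.
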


\begin{proof} Let $\Xcal = \mathbbm{N}$ and consider the class $\Hcal = \{h_e, h_o\}$ such that $h_e(x) = \mathbbm{1}\{x \text{ is even}\}$ and $h_o(x) = \mathbbm{1}\{x \text{ is odd}\}.$ We will show that $\Hcal$ is not non-uniformly noise-independent generatable. We need to show that for every generator $\Gcal$, there exists a hypothesis $h \in \Hcal$ such that for every $d \in \naturals$, there exists a sequence $x_1, x_2, \dots$ with 

$$\sum_{t=1}^{\infty} \mathbbm{1}\{x_t \notin \supp(h)\} < \infty,$$
\noindent so that for every $t \in \mathbbm{N}$ such that $|\{x_1, \dots, x_t\} \cap \supp(h)| = d$, there exists a $s \geq t$ where  $\Gcal(x_{1:s}) \notin  \supp(h) \setminus \{x_1, \dots, x_s\}$. To that end, fix a generator $\Gcal.$  Let $S = \{\Gcal(1,2, \dots, i)\}_{i \in \naturals}.$ There are three cases to consider: (1) the number of even numbers in $S$ is finite (2) the number of odd numbers in $S$ is finite and (3) there are infinite number of both even and odd numbers in $S$. Cases (1) and (2) are symmetric. So, without loss of generality, it suffices only to consider Cases (1) and (3). 

Starting with Case (1), let $p \in \naturals$ be the smallest time point such that for all $t \geq p$, we have that $\Gcal(1, \dots, t)$ is odd. We will pick $h_{e}$ to use for the lower bound. Fix some $d \in \naturals$. Consider the sequence $x_1, x_2, \dots$ such that $x_i = i$ for all $i \leq \max\{p, 2d\}$ and $x_i = 2i$ for all $i > \max\{p, 2d\}.$ Note that 

$$\sum_{t=1}^{\infty} \mathbbm{1}\{x_t \notin \supp(h_e)\} \leq \frac{\max\{p, 2d\}}{2}  < \infty.$$

Suppose $2d < p.$ Then, observe that only on time point $t = 2d$ do we have that $|\{x_1, \dots, x_{t}\} \cap \supp(h_e)| = d$. However, by definition, on time point $s = p \geq 2d = t$, we have that $\Gcal(x_1, \dots, x_s)$ is odd and therefore not in $\supp(h_e).$ Suppose that $2d \geq p.$ Then, like before, only on time point $t = 2d$ do we have that $|\{x_1, \dots, x_{t}\} \cap \supp(h_e)| = d$. However, by definition, also on time point $s = 2d \geq p$, we have that $\Gcal(x_1, \dots, x_s)$ is odd and therefore not in $\supp(h_e).$ Since $d \in \naturals$ is arbitrary, this is true for all $d \in \naturals$, completing the proof for Case (1). As mentioned before, the proof for Case (2) follows by symmetry.  

We now consider Case (3). For this case, we will pick $h_{e}$ for the lower bound. Fix some $d \in \naturals$. Let $p \geq 2d$ be the smallest time point after and including time point $2d$ such that $\Gcal(1,2, \dots, p)$ is odd.  Such a $p \in \naturals$ must exist since $S$ contains an infinite number of odd numbers. Now, consider the sequence $x_1, x_2, \dots$ such that $x_i = i$ for $i \leq p$ and $x_i = 2i$ for all $i > p.$ Note that 

$$\sum_{t=1}^{\infty} \mathbbm{1}\{x_t \notin \supp(h_e)\} \leq \frac{p}{2}  < \infty.$$

 Observe that only at time point $t = 2d$ do we have that $|\{x_1, \dots, x_t\} \cap \supp(h_e)| = d.$ However, by definition, also on time point $s = p \geq 2d = t$, we have that $\Gcal(x_1, \dots, x_p)$ is odd and therefore does not lie in $\operatorname{supp}(h_e).$ Since $d \in \naturals$ is arbitrary, this is true for all $d \in \naturals$, completing the proof for Case (3) and the overall proof.
\end{proof}

\section{Proof of Theorem \ref{thm:noisyunifgen}} \label{app:noisyunifgen}
\begin{proof} (of necessity in Theorem \ref{thm:noisyunifgen}) Let $\Xcal$ be countable and $\Hcal \subseteq \{0, 1\}^{\Xcal}$ satisfy the $\operatorname{UUS}$ property. Suppose there exists a $n \in \naturals$ such that $\operatorname{NC}_n(\Hcal) = \infty.$ We need to show that this means that $\Hcal$ is not uniformly noise-dependent generatable. In particular, we need to show that for every generator $\Gcal$, there exists a noise level $n^{\star} \in \naturals$ such that for every $d \in \mathbb{N}$, there exists a hypothesis $h \in \Hcal$ and a sequence $x_1, x_2, \dots$ with $\sum_{t=1}^{\infty} \mathbbm{1}\{x_t \notin \supp(h)\} \leq n^{\star},$ so that for every  $t \in \mathbb{N}$ with  $|\{x_1, \dots, x_{t}\}| = d$, there exists $s \geq t$ where $\Gcal(x_{1:s}) \notin  \supp(h) \setminus \{x_1, \dots, x_s\}$. To that end, let $\Gcal$ be any generator and consider the noise level $n^{\star} = n$ so that $\operatorname{NC}_{n^{\star}}(\Hcal) = \infty.$ Fix any $d \in \naturals.$ We will now pick such a hypothesis $h^{\star} \in \Hcal$ and a valid stream $x_1, x_2, \dots.$

Since $\operatorname{NC}_{n^{\star}}(\Hcal) = \infty$, we know there exists a $d^{\star} \geq d$ and a sequence of distinct examples $x_1, \dots, x_{d^{\star}}$ such that $\langle x_1, \dots, x_{d^{\star}}\rangle_{\Hcal, n^{\star}} \neq \bot$ and $|\langle x_1, \dots, x_{d^{\star}}\rangle_{\Hcal, n^{\star}}| < \infty.$ Let $q = |\langle x_1, \dots, x_{d^{\star}}\rangle_{\Hcal, n^{\star}} \setminus \{x_1, \dots, x_{d^{\star}}\}|$ and $z_1, \dots, z_q$ be the elements of $\langle x_1, \dots, x_{d^{\star}}\rangle_{\Hcal, n^{\star}} \setminus \{x_1, \dots, x_{d^{\star}}\}$ sorted in their natural order. Let $x_{1:d^{\star}+q}$ denote the sequence obtained by concatenating $z_1, \dots, z_q$ to the end of $x_1, \dots, x_{d^{\star}}.$ Let $\hat{x}_{d^{\star} + q} = \Gcal(x_{1:d^{\star} + q})$ and suppose without loss of generality that $\hat{x}_{d^{\star} + q} \notin \{x_1, \dots, x_{d^{\star} + q}\}.$ Let $h^{\star} \in \Hcal(x_{1:d^{\star}}, n^{\star})$ be any hypothesis such that $\hat{x}_{d^{\star} + q} \notin \supp(h^{\star}).$ Such a hypothesis must exist because $\hat{x}_{d^{\star} + q} \notin \{x_1, \dots, x_{d^{\star} + q}\}$ and $\langle x_1, \dots, x_{d^{\star}}\rangle_{\Hcal, n^{\star}} \subseteq \{x_1, \dots, x_{d^{\star} + q}\}.$ Finally, let $x_{d^{\star} + q + 1}, x_{d^{\star} + q + 2}, \dots$ be any completion of the stream such that $\{x_{d^{\star} + q + t}\}_{t=1}^{\infty} \subseteq \supp(h^{\star})$ and $\{x_{d^{\star} + q + t}\}_{t=1}^{\infty} \cap \{x_t\}_{t=1}^{d^{\star} + q} = \emptyset.$ 

We now complete the proof.  Let $h^{\star}$ and $x_1, x_2, \dots$ be the hypothesis and stream chosen above. First, observe that
$$\sum_{t=1}^{\infty} \mathbbm{1}\{x_t \notin \supp(h^{\star})\} = \sum_{t=1}^{d^{\star}} \mathbbm{1}\{x_t \notin \supp(h^{\star})\} \leq n^{\star}.$$ Second, there exists only one $t \in \naturals$, namely $t = d$, such that $|\{x_1, \dots, x_t\}| = d,$ as the first $d^{\star} + 1 > d$ examples of the stream are distinct by construction. However, observe that by construction, at time point $s = d^{\star} + q \geq d$, we have that $\Gcal(x_1, \dots, x_{s}) = \hat{x}_{d^{\star}+q} \notin \supp(h^{\star})\setminus \{x_1, \dots, x_{s}\}$ by choice of $h^{\star}.$ Since $\Gcal$ and $d \in \naturals$ were picked arbitrarily, this is true for all $\Gcal$ and $d \in \naturals$, which completes the proof.
\end{proof}

\begin{proof} (of sufficiency in Theorem \ref{thm:noisyunifgen})
The key intuition is that the generator, without knowing the adversary's noise level, continually computes the largest noise level for which it has observed enough unique examples to generate perfectly. Eventually, its calculated noise level will surpass and thus account for the adversary's noise level, ensuring that the generator is perfect from then on.

Let $\Xcal$ be countable and $\Hcal \subseteq \{0, 1\}^{\Xcal}$ satisfy the $\operatorname{UUS}$ property. We need to show that if $\operatorname{NC}_n(\Hcal) < \infty$ for all $n \in \mathbb{N},$ then $\Hcal$ is uniformly noise-dependent generatable. For a sequence $x_1, x_2, \dots$, let $d_t  := |\{x_1, \dots, x_t\}|$ denote the number of distinct examples amongst the first $t$ examples. Consider the following generator $\Gcal.$ At time $t \in \naturals$, $\Gcal$ computes $n_t := \max \{n \in [t] : \operatorname{NC}_n(\mathcal{H}) < d_t\} \cup \{0\}.$ If $n_t = 0,$ $\Gcal$ plays any $x \in \Xcal.$ Otherwise, it plays from $\langle x_1, \dots, x_{t}\rangle_{\Hcal, n_t} \setminus \{x_1, x_2, \dots , x_t\}.$ We now show that $\Gcal$ is a uniform noise-dependent generator. To that end, let $n^{\star} \in \naturals$ be the noise level chosen by the adversary. We claim that $\Gcal$ can perfectly generate after observing $\max\{\operatorname{NC}_{n^{\star}}(\Hcal) + 1, n^{\star}\}$ distinct examples. To see why, let $h \in \Hcal$ be any hypothesis, and $x_1, x_2, \dots$ be any stream such that $\sum_{t=1}^{\infty} \mathbbm{1}\{x_t \notin \supp(h)\} \leq n^{\star}.$ Without loss of generality, suppose there exists $t^{\star} \in \mathbb{N}$ such that $d_{t^{\star}} = \max\{\operatorname{NC}_{n^{\star}}(\Hcal) + 1, n^{\star}\}.$ Observe that $t^{\star} \geq n^{\star}$. Fix any $s \geq t^{\star}$. By definition of $n_t$, it must be the case that $n_s \geq n^{\star}.$ Thus, $h \in \Hcal(x_1, \dots, x_s, n_s)$ and $\langle x_1, \dots, x_s \rangle_{\Hcal, n_s} \subseteq \supp(h)$. Moreover, because $d_s > \operatorname{NC}_{n_s}(\Hcal)$, again by definition, it must be the case that $|\langle x_1, \dots, x_s \rangle_{\Hcal, n_s}| = \infty.$ Accordingly, $\langle x_1, \dots, x_s \rangle_{\Hcal, n_s} \setminus \{x_1, \dots, x_s\} \neq \emptyset$ and $\Gcal$ is guaranteed to output an example in $\operatorname{supp}(h)$ on round $s$. Since $s \geq t^{\star}$ was chosen arbitrarily, this is true for all such $s$, completing the proof. \end{proof}

\section{Proof of Lemma \ref{lem:suffnnug}} \label{app:suffnnug}
\begin{proof}
    Let $\Xcal$ be countable and $\Hcal \subseteq \{0, 1\}^{\Xcal}$ be any class satisfying the $\operatorname{UUS}$ property. Suppose there exists a non-decreasing sequence of classes $\Hcal_1 \subseteq \Hcal_2 \subseteq \cdots$ such that $\Hcal = \bigcup_{i = 1}^{\infty} \Hcal_i$ and $\operatorname{NC}_i(\Hcal_i) < \infty$ for all $i \in \naturals$. For a sequence $x_1, x_2, \dots$, let $d_t  := |\{x_1, \dots, x_t\}|$ denote the number of distinct examples amongst the first $t$ examples. Consider the following generator $\Gcal.$ At time $t\in \naturals$, $\Gcal$ computes $j_t: = \max \{i \in [t] : \operatorname{NC}_i(\mathcal{H}_i) < d_t\} \cup \{0\}.$ If $j_t = 0,$ $\Gcal$ plays any $x \in \Xcal.$ Otherwise, it plays from $\langle x_1, \dots, x_{t}\rangle_{\Hcal_{j_t}, j_t} \setminus \{x_1, x_2, \dots , x_t\}.$ We claim that $\Gcal$ is a non-uniform noise-dependent generator for $\Hcal$. To see why, let $n^{\star} \in \naturals$ and $h \in \Hcal$ be the noise level and hypothesis picked by the adversary.  Let $i^{\star} \in \naturals$ be the smallest index such that $h \in \Hcal_{i^{\star}}$ and define $j^{\star} = \max\{i^{\star}, n^{\star}\}$. We claim that $\Gcal$ perfectly generates after observing $\max\{\operatorname{NC}_{j^{\star}}(\Hcal_{j^{\star}}) + 1, j^{\star}\}$ distinct examples.  To that end, let $x_1, x_2, \dots$ be any sequence such that $\sum_{t=1}^{\infty} \mathbbm{1}\{x_t \notin \supp(h)\} \leq n^{\star}.$ Without loss of generality, suppose there exists $t^{\star} \in \mathbb{N}$ such that $d_{t^{\star}} = \max\{\operatorname{NC}_{j^{\star}}(\Hcal_{j^{\star}}) + 1, j^{\star}\}$. Observe that $t^{\star} \geq j^{\star}$. Fix any $s \geq t^{\star}$. By definition of $j_t$, it must be the case that $j_s \geq j^{\star} \geq n^{\star}.$ Moreover, because $j_s \geq i^{\star}$, we also have that $h \in \Hcal_{j_s}$. Together, this means that $h \in \Hcal_{j_s}(x_1, \dots, x_s, j_s)$ and $\langle x_1, \dots, x_s \rangle_{\Hcal_{j_s}, j_s} \subseteq \supp(h)$. Also, because $d_s > \operatorname{NC}_{j_s}(\Hcal_{j_s})$ it must be the case that $|\langle x_1, \dots, x_s \rangle_{\Hcal_{j_s}, j_s}| = \infty.$ Accordingly, $\langle x_1, \dots, x_s \rangle_{\Hcal_{j_s}, j_s} \setminus \{x_1, \dots, x_s\} \neq \emptyset$ and $\Gcal$ is guaranteed to output an example in $\operatorname{supp}(h)$ on round $s$. Since $s \geq t^{\star}$ was chosen arbitrarily, this is true for all such $s$, completing the proof.
\end{proof}

\section{Proof of Theorem \ref{thm:altsuffnoisygenlim}} \label{app:altsuffnoisygenlim}
\begin{proof} Let $\Xcal$ be countable and $\Hcal \subseteq \{0, 1\}^{\Xcal}$ satisfy the $\operatorname{UUS}$ property. Suppose there exists  a finite sequence of uniformly noise-independent generatable classes $\Hcal_1, \Hcal_2, \dots, \Hcal_k$ such that $\Hcal = \bigcup_{i=1}^k \Hcal_i$. By Theorem \ref{thm:ung}, we know that for all $i \in [k]$, we have that $\Bigl| \bigcap_{h \in \Hcal_i} \operatorname{supp}(h) \Bigl| = \infty.$ For every $i \in [k]$, let $z^{i}_1, z^i_2, \dots$ denote the elements of $\bigcap_{h \in \Hcal_i} \operatorname{supp}(h)$ sorted in their natural ordering. Consider the following generator $\Gcal$. Given any noisy enumeration $x_1, x_2, \dots $ and any $t \in \naturals$,  $\Gcal$ first computes 
$$p_{t}^i := \max \{p \in \naturals: \{z^i_{1}, \dots, z^i_{p}\} \subseteq \{x_1, \dots, x_t\}\}$$
for every $i \in [k]$. Then, $\Gcal$ computes $i_t = \arg \max_{i \in [k]} p^i_{t}.$ Finally, $\Gcal$ plays arbitrarily from $\{z^{i_t}_1, z^{i_t}_2, \dots\} \setminus \{x_1, \dots x_t\}.$ We claim that $\Gcal$ generates from $\Hcal$ in the limit. 

To see why, let $h \in \Hcal$ and $x_1, x_2, \dots$ be the hypothesis and noisy enumeration selected by the adversary. Let $i^{\star} \in [k]$ be such that $h \in \Hcal_{i^{\star}}$ and $s^{\star} \in \naturals$ be such that for all $t \geq s^{\star}$, we have that $x_t \in \operatorname{supp}(h)$. Such an $s^{\star}$ exists because $\sum_{t=1}^{\infty} \mathbbm{1}\{x_t \notin \operatorname{supp}(h)\} < \infty.$ Let $S^{\star} \subseteq [k]$ be such that $i \in S^{\star}$ if and only if $\{z^{i}_1, z^{i}_2, \dots \} \subseteq \{x_1, x_2, \dots\}.$ Note that $i^{\star} \in S^{\star}$ by definition of $z^{i^{\star}}_1, z^{i^{\star}}_2, \dots$ and the fact that $x_1, x_2, \dots$ is a noisy enumeration. By definition of $S^{\star}$ and $p_t^i$ it must be the case that for all $j \notin S^{\star}$, we have that $\sup_{t \in \naturals} p_t^j < \infty.$ Since $k < \infty$, we then get that $\max_{j \notin S^{\star}} \sup_{t \in \naturals} p_t^j < \infty.$ On the other hand, for all $i \in S^{\star}$, we have that $p_t^i \rightarrow \infty.$ More precisely, for every $i \in S^{\star}$, there exists a finite $t_i \in \naturals$ such that $p^i_{t_i} \geq \max_{j \notin S^{\star}} \sup_{t \in \naturals} p_t^j.$ Since $|S^{\star}| < \infty$, we also have that $t^{\star} := \max_{i \in S^{\star}}t_i < \infty.$ Our proof is complete after noting that for all $t \geq t^{\star}$, we have that $i_t \in S^{\star}$ and $\{z^{i_t}_1, z^{i_t}_2, \dots\} \setminus \{x_1, \dots x_t\} \subseteq \supp(h).$
\end{proof}

\end{document}